\newcommand{\foo}[1]{%
\begin{tikzpicture}[#1]%
\draw (0,0) -- (0ex,1ex);%
\draw (0ex,1ex) -- (1ex,0ex);%
\draw (1ex, 0ex) -- (0ex, 0ex);
\end{tikzpicture}%
}
\theoremstyle{plain}
\newtheorem{theorem}{Theorem}[section]
\theoremstyle{definition}
\newtheorem{definition}[theorem]{Definition}
\theoremstyle{remark}
\newcommand{\tbold}[1]{\textbf{#1}}
\icmltitlerunning{PolySketchFormer: Fast Transformers via Sketches for Polynomial Kernels}
\newcommand{\lt}{\textsf{lt}_{\,\foo{}}}
\newcommand{\Attn}{\textsf{Attn}}
\newcommand{\bQ}{\mathbf{Q}}
\newcommand{\bK}{\mathbf{K}}
\begin{document}

\twocolumn[
\icmltitle{PolySketchFormer: Fast Transformers via Sketching Polynomial Kernels}



\icmlsetsymbol{equal}{*}

\begin{icmlauthorlist}
\icmlauthor{Praneeth Kacham}{equal,cmu}
\icmlauthor{Vahab Mirrokni}{equal,comp}
\icmlauthor{Peilin Zhong}{equal,comp}
\end{icmlauthorlist}

\icmlaffiliation{cmu}{Carnegie Mellon University}
\icmlaffiliation{comp}{Google Research}

\icmlcorrespondingauthor{Praneeth Kacham}{pkacham@cs.cmu.edu}
\icmlcorrespondingauthor{Vahab Mirrokni}{mirrokni@google.com}
\icmlcorrespondingauthor{Peilin Zhong}{peilinz@google.com}

\icmlkeywords{Linear Time Attention, Polynomial Kernels, Causal Masking}

\vskip 0.3in
]



\printAffiliationsAndNotice{}  

\begin{abstract}
The quadratic time and memory complexity inherent to self-attention mechanisms, with respect to sequence length, presents a critical computational bottleneck in the training and deployment of large-scale Transformer-based language models. Recent theoretical results indicate the intractability of sub-quadratic softmax attention approximation under reasonable complexity assumptions. This paper addresses this challenge by first demonstrating that polynomial attention with high degree can effectively replace softmax without sacrificing model quality. Next, we develop polynomial sketching techniques from numerical linear algebra to achieve linear-time polynomial attention with approximation guarantees. Crucially, our approach achieves this speedup without requiring the sparsification of attention matrices. We also present a block-based algorithm to apply causal masking efficiently. Combining these techniques, we provide \emph{PolySketchFormer}, a practical linear-time Transformer architecture for language modeling that offers provable guarantees.

We validate PolySketchFormer empirically by training language models capable of handling long contexts. These experiments utilize both synthetic and real-world datasets (PG19, Wikipedia and C4) on Google Cloud TPUs. For context lengths of 32k and GPT-2 style models, our model achieves 2x speedup in training compared to FlashAttention of the fastest configuration, with no observed degradation in quality across our  experiments.\footnote{Our implementation is available at \url{{https://github.com/google-research/google-research/tree/master/polysketchformer}}}
\end{abstract}

\section{Introduction}
Transformer-based models~\citep{vaswani2017attention} are state-of-the-art for many natural language tasks, leading to breakthroughs in machine translation, language understanding \citep{devlin-etal-2019-bert}, and language modeling \citep{brown2020language, chowdhery2022palm, openai2023gpt4, anil2023palm}. However, the quadratic time and space complexity of the attention mechanism limits scalability for long context lengths. Numerous ``efficient transformers'' have been proposed to address this issue \cite{wang2020linformer, katharopoulos2020transformers, choromanski2020rethinking, han2023hyperattention}. These variants approximate\footnote{``Approximation'' is used informally here, since some ``efficient transformers'' deviate significantly from the vanilla model.} the standard attention mechanism. A survey by \citet{tay2020efficient} provides a broad overview of these techniques. While many efficient transformer constructions achieve linear theoretical training complexity, the survey observes that practical training speedups are often less significant, with potential losses in model quality. This explains the continued dominance of vanilla transformers.

In this work, we focus on improving training latency for transformer models in decoding-only tasks, specifically language modeling trained via next-word prediction. Our techniques are generalizable to encoding-only and encoder-decoder transformers, with potential applications beyond language modeling. We will first briefly discuss existing approaches to make training of transformer models faster and then place our contributions in context.

\begin{figure}
    \includegraphics[width=\linewidth]{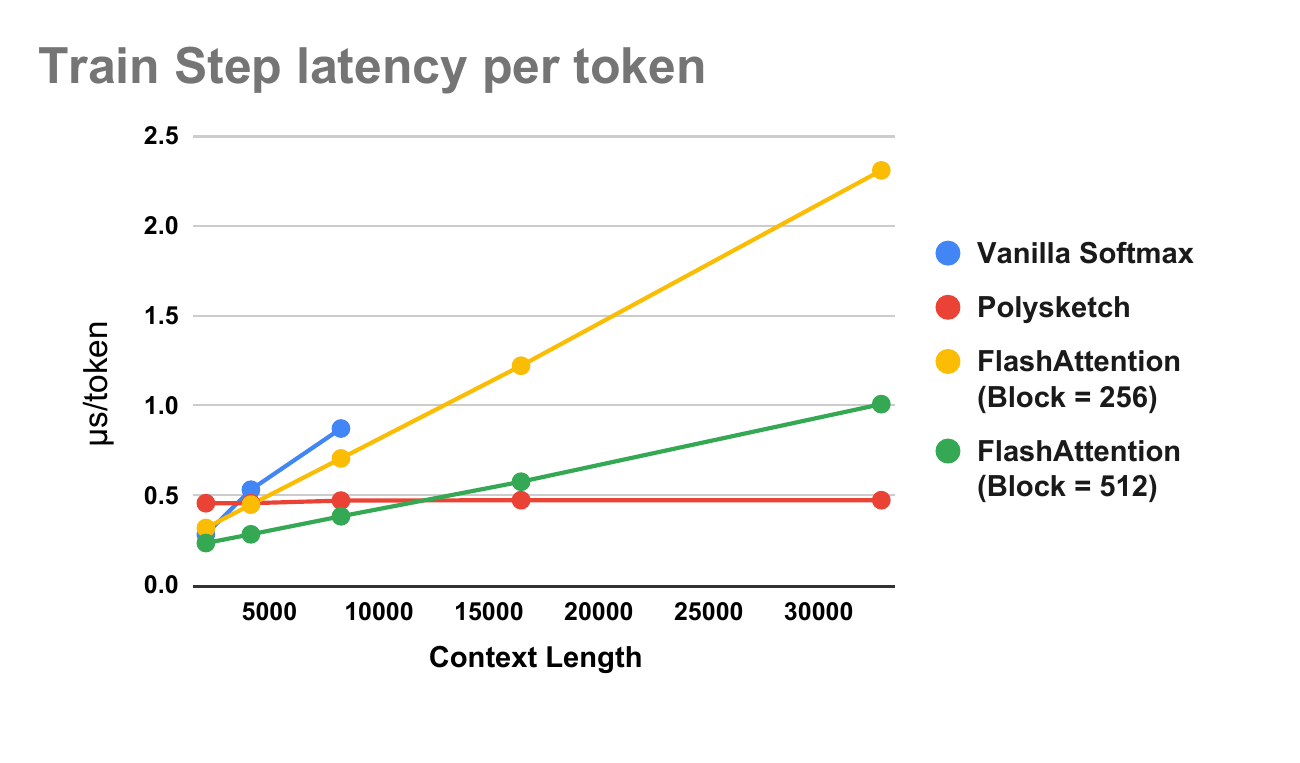}
    \vspace{-0.5in}
    \caption{Train step latency per token in \textmu s/token of GPT-2 small style models with softmax attention (FlashAttention) v.s. ours. Each model is trained with 1M tokens batches. Vanilla softmax attention goes out-of-memory (OOM) for context lengths > 8k.}
    \label{fig:attention_per_token}
\end{figure}
\textbf{Memory efficient and I/O aware approach.}
Recent work by \citep{dao2022flashattention, dao2023flashattention} on FlashAttention and FlashAttention-2 seeks to enable vanilla transformer training on long contexts. This is achieved through I/O-aware optimizations like blocking/tiling and rematerialization, significantly improving memory efficiency. While this reduces the $O(n^2)$\footnote{$n$ denotes the context length -- the number of input tokens.} HBM (High-Bandwidth Memory) requirements of accelerators (GPUs/TPUs), enabling training on thousands of tokens, the computational cost per step remains $O(n^2$) (see Figure~\ref{fig:attention_per_token}), and this remains a barrier to further scaling the context length.

\textbf{Approximate softmax attention via sparsification.}
Another line of work tries to approximate softmax attention and avoid $n\times n$ attention matrix computation by focusing on a smaller set of pairs of \emph{query} and \emph{key} vectors. Techniques include utilizing locality/positional information \cite{child2019generating, beltagy2020longformer,xiao2023efficient,zaheer2020big,roy2021efficient,ding2023longnet}, hashing/bucketing~\cite{kitaev2020reformer,sun2021sparse,han2023hyperattention}, low-rank projection~\cite{wang2020linformer}, or other sparsification methods. In these cases, there is usually some trade-off between model quality and sparsity, i.e., denser attentions improve quality but decrease speed. Hence, an efficient high-quality $n\times n$ attention mechanism may potentially improve on these sparsification-based techniques.

\textbf{Efficient $n\times n$ attention by kernel-based methods.}
The kernel-based view of attention was taken by a series of earlier works  \citep{tsai2019transformer, katharopoulos2020transformers,choromanski2020rethinking,peng2021random}. 
In particular, let $\{\bq_i\in\mathbb{R}^h\}_{i\in[n]}, \{\bk_i\in\mathbb{R}^h\}_{i\in[n]},\{\bv_i\in\mathbb{R}^h\}_{i\in[n]}$ be sets of \emph{query}, \emph{key} and \emph{value} vectors respectively, the output of the attention mechanism for query $\bq_i$ is computed as 
$
\Attn(\bq_i, \{\bk_j\}, \{\bv_j\}) = \sum_{j\in [n]} \frac{\sigma(\bq_i, \bk_j)}{\sum_{j'\in [n]}\sigma(\bq_i, \bk_{j'})} \cdot \bv_j^\top.
$
When the similarity kernel function $\sigma(\bx,\by) := \exp(\langle \bx, \by \rangle)$, the above attention is exactly the softmax attention\footnote{In standard softmax attention, $\sigma(\bx,\by) := \exp(\langle \bx, \by \rangle / \sqrt{h})$. We omit $\sqrt{h}$ here for simplicity of the presentation.}.
If there exists a feature map $\phi$ such that $\sigma(\bx,\by)\equiv \langle \phi(\bx),\phi(\by) \rangle$, 
 the attention output for query $\bq_i$ can be rewritten as:
\begin{align*}
\Attn(\bq_i, \{\bk_j\}, \{\bv_j\}) &= \sum_{j\in [n]} \frac{\phi(\bq_i)^\top \cdot\phi(\bk_j)}{\sum_{j'\in [n]}\phi(\bq_i)^\top \cdot \phi(\bk_{j'})} \cdot \bv_j^\top\\
&= \frac{\phi(\bq_i)^\top \cdot \sum_{j\in [n]}\phi(\bk_j)\cdot \bv_j^\top}{\phi(\bq_i)^\top \cdot \sum_{j'\in [n]}\phi(\bk_{j'})}.
\end{align*}
If $\phi(\cdot)$ has a finite dimension $h'$, one can first compute $ \sum_{j'\in [n]}\phi(\bk_{j'})$ and $\sum_{j\in [n]}\phi(\bk_j)\cdot \bv_j^\top$ in $O(nhh')$ time, and then compute $\Attn(\bq_i, \{\bk_j\}, \{\bv_j\})$ for all $i\in[n]$ in another $O(nhh')$ time, which is linear in the context length $n$.

Most of existing works such as~\cite{katharopoulos2020transformers,bolya2022hydra,tsai2019transformer,babiloni2023linear, yang2023gated,kasai2021finetuning} only considers similarity functions $\sigma(\bx,\by)$ with low dimensional feature mapping (e.g., $\sigma(\bx,\by)=\langle\bx, \by\rangle$, $\langle \bx, \by\rangle^2$, $\langle \mathrm{elu}(\bx)+\mathbf{1}, \mathrm{elu}(\by)+\mathbf{1}\rangle$, etc.).
\citet{hua2022transformer} proposed to use a mixed strategy based on the positions of the tokens: If positions $i,j\in[n]$ are close enough, they use $\sigma(\bq_i,\bk_j)=\mathrm{relu}^2(\langle \bq_i, \bk_j \rangle)$. Otherwise, they use $\sigma(\bq_i,\bk_j)=\langle \bq_i, \bk_j \rangle$, which again has a low dimensional feature mapping.
These simple similarity kernel functions $\sigma(\cdot)$ either suffer from some loss of model quality~\cite{katharopoulos2020transformers} or require additional tweaks of network structures (e.g., significantly increasing the number of attention layers~\cite{hua2022transformer}, introducing decay factors for earlier tokens~\cite{yang2023gated}) to achieve comparable model quality as softmax attention.

Some other previous works try to approximate the regular softmax attention via approximate feature mappings for the exponential similarity function.
Random Feature Attention~\cite{peng2021random} uses random Fourier features to produce an approximate feature mapping but without provable approximation guarantees.
Performer \citep{choromanski2020rethinking} provides a low dimensional approximate non-negative feature mapping $\phi'(\cdot)$ via positive orthogonal random features.
It has provable approximation to the pairwise similarities, i.e., the maximum error
$
\max_{i,j\in[n]}\left|\langle \phi'(\bq_i), \phi'(\bk_j) \rangle - \exp(\langle \bq_i, \bk_j\rangle)\right|
$
is small.
However, the dimension of their feature mapping has to grow exponentially in $\|\bq_i\|_2^2$ and $\|\bk_j\|_2^2$ to have a small error. In other words, consider a single query $\bq_i$ and two keys $\bk_j$ and $\bk_{j'}$ such that all $\|\bq_i\|_2, \|\bk_j\|_2, \|\bk_{j'}\|_2 \le R$, then $\exp(\langle\bq_i, \bk_{j}\rangle)/\exp(\langle \bq_i, \bk_{j'}\rangle) \le \exp(2R^2)$. Thus, the maximum relative probability masses that can be assigned while guaranteeing the approximation factor is limited by the dimension of the feature mapping used. 
In fact, a recent work~\cite{alman2023fast} implies that it is actually impossible to get above approximation for pairwise exponential similarity under Strong Exponential Time Hypothesis (SETH~\cite{impagliazzo2001problems}) when the query and key vectors have large entries. Furthermore, it was observed empirically~\cite{choromanski2020rethinking,hua2022transformer} (also see Figure~\ref{fig:context-length-vs-ppl}) that there is a clear model quality drop in comparison with the exact softmax attention.

Given barriers above, a natural question arises: \emph{Does there exist a similarity kernel function that achieves similar model quality as softmax attention while also admitting proper approximation by a low-dimensional feature mapping?}

\subsection{Our Contributions}\label{sec:our_contribution}
\paragraph{Polynomial similarity kernel function of high degree.} 
To tackle the first part of the above question, we explore the power of the polynomial kernel function $\sigma(\bx,\by)=\langle \bx,\by\rangle^p$ for large even degrees $p\geq 4$ empirically for language modelling tasks.
In particular, we look at the standard GPT-2~\cite{radford2019language} architecture (from the small size to the large size) and the strongest known Transformer recipe (a.k.a. Transformer++) which is a common baseline model studied in many previous works as well~\cite{hua2022transformer,gu2023mamba,yang2023gated}.
We compare the models with vanilla softmax attention to the models that simply replace the attention mechanism with degree-$p$ polynomial attention.
We consider context lengths ranging from 512 to 32k.
As shown in Figure~\ref{fig:context-length-vs-ppl} and our other empirical studies (see Section~\ref{sec:exp} and Appendix), for all synthetic tasks (including tasks for measuring content aware reasoning and memorization capabilities, see Appendix~\ref{apx:synthetic}), autoregressive pre-training metrics (perplexity) and few-shot evaluations that we studied, the models with degree-$p$ polynomial attention ($p\geq 4$) achieve comparable performance as the models with the vanilla softmax attention. 
In addition, we discuss the behavioral similarities between softmax attention and polynomial attention in Section ~\ref{sec:softmax_vs_poly} to provide more intuitions why they had similar empirical outcomes.

\paragraph{Approximate feature mapping for polynomial kernel.}
Unlike exponential kernel whose exact feature mapping has infinite dimension, the feature mapping of degree-$p$ polynomial kernel over $\mathbb{R}^h$ has a finite feature mapping of dimension $h^p$ (see e.g.,~\cite{avron2014subspace}).
In practice, the head size $h$ is usually $64$, $128$ or even $256$~\cite{chowdhery2023palm}.
Therefore, computing the exact feature mapping for $p\geq 4$ is still expensive.
To address this issue, we apply the sketching technique from the numerical linear algebra literature to compute a low-dimensional approximate feature mapping $\phi'$ such that $\langle \phi'(\bx), \phi'(\by) \rangle\approx\langle \bx, \by \rangle^p$. Sketching polynomial kernels~\cite{avron2014subspace,ahle2020oblivious,song2021fast,meister2019tight} has been extensively studied in the literature, and the techniques are used in many applications such as kernel regression~\cite{song2021fast}, kernel PCA~\cite{avron2014subspace}, evaluating element-wise matrix functions~\cite{han2020polynomial}, and etc.
However, though $\langle \bx, \by \rangle^p$ is guaranteed to be non-negative for even integer $p$, none of the approximate feature mappings provided by previous work guarantees $\langle \phi'(\bx) ,\phi'(\by)\rangle \geq 0$.
This is undesired in practice since the original normalized attention weights $\frac{\langle\bq_i,\bk_1\rangle^p}{\sum_{j\in [n]} \langle\bq_i,\bk_j\rangle^p}, \frac{\langle\bq_i,\bk_2\rangle^p}{\sum_{j\in [n]} \langle\bq_i,\bk_j\rangle^p}, \cdots, \frac{\langle\bq_i,\bk_n\rangle^p}{\sum_{j\in [n]} \langle\bq_i,\bk_j\rangle^p}$ naturally represent a probability distribution, but  the property does not hold when there exists some negative attention weight $\langle \phi'(\bq_i), \phi'(\bk_j) \rangle$.
More importantly, previous work~\cite{choromanski2020rethinking,katharopoulos2020transformers} found that negative attention weights make the training process unstable, potentially causing non-convergence.
To address this issue, we open the construction of~\cite{ahle2020oblivious} and develop an approximate feature mapping with desired non-negativity property.
\begin{theorem}\label{thm:main_in_intro}
Let $p\geq 2$ be an even integer, $\varepsilon\in (0,0.5)$ be an error parameter. Let $h$ be the dimension of the vectors to be mapped. There is a randomized feature mapping $\phi' : \R^h \rightarrow \R^{r^2}$ for $r = \Theta(p\varepsilon^{-2}\log 1/\delta)$, such that given any set of vectors $\{\bq_i \in \R^h\}_{i \in [n]}, \{\bk_j \in \R^h\}_{i \in [n]}$:
\begin{enumerate}[nolistsep]
\item $\forall i, j\in[n], \langle \phi'(\bq_i), \phi'(\bk_j) \rangle \geq 0$.
\item 
$
\underset{i,j}{\sum} | \langle \phi'(\bq_i), \phi'(\bk_j)\rangle - \langle \bq_i, \bk_j \rangle^p |^2 \leq \varepsilon^2 \underset{i,j}{\sum} \|\bq_i\|_2^{2p}\|\bk_j\|_2^{2p}
$ holds with probability $1-\delta$.
\item Computing $\phi'(\bx)$ only requires $p/2$ matrix-vector multiplications of matrix size $h\times r$, $(p/2-2)$ matrix-vector multiplications of matrix size $r \times r$, $(p/2-1)$ Hadamard products of $r$-dimensional vectors, and $1$  self-Kronecker product of an $r$-dimensional vector.
\end{enumerate}
\end{theorem}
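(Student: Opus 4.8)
The plan is to exploit that $p$ is even and reduce the degree-$p$ kernel to the degree-$(p/2)$ kernel. Writing $q := p/2$, I would first build a sketch $\psi : \R^h \to \R^r$ for the degree-$q$ polynomial kernel with $\langle \psi(\bx), \psi(\by) \rangle \approx \langle \bx, \by \rangle^q$, and then define
\[
\phi'(\bx) := \psi(\bx) \otimes \psi(\bx) \in \R^{r^2}.
\]
With this definition $\langle \phi'(\bq_i), \phi'(\bk_j) \rangle = \langle \psi(\bq_i) \otimes \psi(\bq_i),\ \psi(\bk_j) \otimes \psi(\bk_j) \rangle = \langle \psi(\bq_i), \psi(\bk_j) \rangle^2 \ge 0$, so Property~1 holds for free; the final self-Kronecker product is exactly the operation that earlier polynomial sketches omit, and is what breaks their non-negativity.

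For $\psi$ I would open up the sequential variant of the polynomial-kernel sketch of \citet{ahle2020oblivious}: set $y_1 = S_1 \bx$ where $S_1 \in \R^{r \times h}$ is a base sketch, and for $t = 2, \dots, q$ set $y_t = (G_t\, y_{t-1}) \odot (S_t\, \bx)$, where each $S_t \in \R^{r\times h}$ is an independent base sketch, each $G_t \in \R^{r\times r}$ an independent mixing matrix, and $\odot$ is the entrywise (Hadamard) product; the mixing $G_q$ at the last step is absorbed into $S_q$ and dropped, and we output $\psi(\bx) = y_q$. Counting operations: $q = p/2$ matrix--vector products of size $h \times r$ (the $S_t \bx$), $q-2 = p/2-2$ matrix--vector products of size $r \times r$ (the $G_t y_{t-1}$ for $2 \le t \le q-1$), $q-1 = p/2-1$ Hadamard products of $r$-dimensional vectors, and finally $1$ self-Kronecker product of the $r$-dimensional vector $y_q$; this is Property~3.

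For Property~2, let $b_{ij} := \langle \bq_i, \bk_j \rangle^q$ and $E_{ij} := \langle \psi(\bq_i), \psi(\bk_j) \rangle - b_{ij}$. The degree-$q$ analysis of the sketch above, in its approximate-matrix-multiplication / second-moment form, gives for every fixed pair $(i,j)$ a bound $\mathbb{E}[E_{ij}^2] = O(q/r) \cdot \|\bq_i\|_2^{2q} \|\bk_j\|_2^{2q}$ (the variance accumulates over the $q$ stages of the chain), together with a comparable bound on $\mathbb{E}[E_{ij}^4]$ and sub-exponential tails, which is why $r = \Theta(q\varepsilon^{-2}\log(1/\delta)) = \Theta(p\varepsilon^{-2}\log(1/\delta))$ is the right dimension. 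Since $\langle \phi'(\bq_i), \phi'(\bk_j) \rangle - \langle \bq_i, \bk_j \rangle^p = (b_{ij} + E_{ij})^2 - b_{ij}^2 = 2 b_{ij} E_{ij} + E_{ij}^2$, we get
\[
\sum_{i,j} \big| \langle \phi'(\bq_i), \phi'(\bk_j) \rangle - \langle \bq_i, \bk_j \rangle^p \big|^2 \ \le\ 8 \sum_{i,j} b_{ij}^2 E_{ij}^2 + 2 \sum_{i,j} E_{ij}^4 .
\]
Taking expectations and using $b_{ij}^2 = \langle \bq_i, \bk_j \rangle^{2q} \le \|\bq_i\|_2^{2q}\|\bk_j\|_2^{2q}$ (Cauchy--Schwarz) together with $4q = 2p$, both sums are $O(q/r) \cdot \big(\sum_i \|\bq_i\|_2^{2p}\big)\big(\sum_j \|\bk_j\|_2^{2p}\big) = O(q/r) \sum_{i,j} \|\bq_i\|_2^{2p}\|\bk_j\|_2^{2p}$, which for the stated $r$ is $\le \varepsilon^2 \sum_{i,j}\|\bq_i\|_2^{2p}\|\bk_j\|_2^{2p}$ in expectation; the sub-exponential tail of the sketch then upgrades this to the claimed bound with probability $1-\delta$.

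The step I expect to be the real obstacle is this last one: obtaining the sum-of-squared-errors bound with $r$ free of any $\log n$ dependence. A naive argument would prove a per-pair bound $|E_{ij}| \le \varepsilon' \|\bq_i\|_2^q \|\bk_j\|_2^q$ and union-bound over the $n^2$ pairs, costing a $\log(n/\delta)$ factor in $r$; to avoid this one must work with the second- and fourth-moment (approximate-matrix-multiplication) form of the \citet{ahle2020oblivious} guarantee, and carefully track how these moments and their tails propagate through the $q = p/2$ stages of the sequential construction --- the $\Theta(p)$ factor in $r$ is exactly the cost of that accumulation. Relative to this, verifying Property~1 and the operation count in Property~3 is immediate from the construction.
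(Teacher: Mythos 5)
Your proposal is correct and follows essentially the same route as the paper: self-tensoring a degree-$p/2$ sketch of \citet{ahle2020oblivious} to get non-negativity for free, the same operation count, and for the error bound the same decomposition $(a^2-b^2)^2 \lesssim b^2(a-b)^2 + (a-b)^4$ applied entrywise and summed (avoiding any union bound over the $n^2$ pairs), which is exactly the content of the paper's Theorem~\ref{thm:tensoring-preserves-amm}. The only cosmetic difference is that the paper phrases the moment bounds on $\langle \T{\bS}\bc_i,\T{\bS}\bd_j\rangle - \langle\bc_i,\bd_j\rangle$ via the $(\varepsilon,\delta,t)$-JL moment property and $\|\cdot\|_{L^t}$ norms rather than your explicit second/fourth moments plus sub-exponential tails.
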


The first property above is the desired non-negativity property that we discussed earlier.
We achieve this property by providing a ``self-tensoring'' technique stated in Theorem~\ref{thm:tensoring-preserves-amm}.
The second property states our error bound.
Unlike the approximation guarantee of~\cite{choromanski2020rethinking}, though our error bound is still in terms of $\ell_2$ norms query and key vectors, it allows a larger ratio between attention weights due to the difference between exponential kernel and polynomial kernel (See more discussions in Appendix~\ref{sec:discussion_error_bound}).
The third property implies that the computation of $\phi'(\cdot)$ only requires a small number of standard matrix/vector operations which can be implemented to run quickly on accelerators (GPUs/TPUs).

Inspired by the literature of learned sketches~\cite{hsu2019learning, aamand2019learned}, we also propose a heuristic which replaces each random projection matrix used in $\phi'(\cdot)$ constructed in Theorem~\ref{thm:main_in_intro} with a comparable size learnable multi-layer dense network.
Since each random matrix used in $\phi'(\cdot)$ has size only $h\times r$ or $r\times r$, the number of parameters that we add to the model is negligible in comparison with the model size.
We observe significant model quality improvements (see Figure~\ref{fig:context-length-vs-ppl}) by learning the sketches through training instead of using randomly sampled sketches.

\paragraph{Block-based lower triangular multiplication for handling causal masks.}
Another bottleneck in applying attention linearization techniques in training transformer models with causal masking on long contexts is to handle a huge number of sequential gradients update due to RNN-style sequential state updates~\cite{hua2022transformer}.
When causal masking is applied, the attention between the query $\bq_i$ and the key $\bk_j$ is masked out when $j>i$ (i.e., the $j$-th token appears later).
More precisely, $\Attn(\bq_i, \{\bk_j\}, \{\bv_j\}) = $
\begin{align*}
\sum_{j\in [i]} \frac{\sigma(\bq_i, \bk_j)}{\sum_{j'\in [i]}\sigma(\bq_i, \bk_{j'})} \cdot \bv_j^\top
=\frac{\phi(\bq_i)^\top \cdot \sum_{j\in [i]}\phi(\bk_j)\cdot \bv_j^\top}{\phi(\bq_i)^\top \cdot \sum_{j'\in [i]}\phi(\bk_{j'})}.
\end{align*}
During the training, to compute the output of the attention mechanism in time linear in context length, one has to compute the prefix sums $\sum_{j \in [i]}\phi(\bk_j) \cdot \T{\bv_j}$ for all $i$ and then multiply the $i$-th prefix sum with the corresponding vector $\T{\phi(\bq_i)}$. This RNN-style sequential state updates make the training process fail in fully utilizing the parallelism strength of modern accelerators.
To resolve above issue, we propose a general block-based approach to compute $\lt(\bA\cdot \T{\bB})\cdot \bC$\footnote{$\lt(\bM)$ denotes the matrix obtained by only keeping the lower triangular entries of $\bM$ and zeroing the rest of the entries.} for arbitrary matrices $\bA,\bB,\bC$ without materializing $\bA\cdot \T{\bB}$, and it only requires a small number of prefix updates.
By working more carefully with our block-based approach, we observe that instead of using the approximate polynomial attention weight via approximate feature mapping, we are able to compute the exact polynomial attention weight between $\bq_i$ and $\bk_j$ if the $i$-th token and the $j$-th token are close in position.
After applying exact polynomial attention weight for local tokens, we see improvements in model qualities (see Figure~\ref{fig:context-length-vs-ppl}, Section~\ref{sec:exp} and other empirical results in Appendix).

\paragraph{Empirical studies.}
We empirically evaluate all our approaches. 
The models equipped with high degree polynomial attention and sketched polynomial attention achieve comparable or better quality on all our evaluation metrics in comparison with models equipped with vanilla softmax attention, and achieve significantly better quality than models with approximate softmax attention provided by Performer~\cite{choromanski2020rethinking}.
For GPT-2 style small size models, the models with sketched polynomial attention achieve 2x speedup in comparison with FlashAttention~\cite{dao2022flashattention,dao2023flashattention} of the fastest configuration for 32k context length.
Notice that we achieve such speed-up without applying any advanced I/O aware optimization techniques.
We believe that our running time can be further reduced by optimizing the implementation in a more careful manner.

\subsection{Other Notation}
$[n]$ denotes the set $\{1,2,3,\cdots,n\}$.
Given a matrix $\bM \in\mathbb{R}^{n \times m}$, we use $\mathbf{m}_i \in\mathbb{R}^m$ to denote the $i$-th row of $\bM$.
We also abuse the notation to use $\bM$ to indicate the set of vectors $\{\mathbf{m}_1,\mathbf{m}_2,\cdots, \mathbf{m}_n\}$.
We use $\bM_{i,j}$ to denote the entry at the $i$-th row and $j$-th column of $\bM$.
We use $\bM^p$ to indicate raising each entry of $\bM$ to the power $p$. 
Let $f:\mathbb{R}^m\rightarrow \mathbb{R}^k$ be an arbitrary function over vectors, we use $f(\bM)\in\mathbb{R}^{n\times k}$ to denote the matrix obtained by replacing the $i$-th row of $\bM$ with $f(\mathbf{m}_i)$.
$\|\bx\|_2$ denotes the $\ell_2$ norm of $\bx$, i.e., $\sqrt{\langle \bx,\bx \rangle}$.
$\frnorm{\bM}$ denotes the Frobenius norm of $\bM$, i.e., $\sqrt{\sum_{i,j} \bM_{i,j}^2}$.
Given vectors $\ba=(a_1,a_2,\cdots,a_m)$ and $\bb=(b_1,b_2,\cdots,b_m)$, $\ba * \bb = (a_1b_1,a_2b_2,\cdots,a_mb_m)$ denotes the entrywise product (Hadamard product), and $\ba \otimes \bb=(a_1b_1,a_1b_2,\cdots, a_1b_m, a_2b_1,a_2b_2,\cdots, a_2b_m,\cdots, a_mb_m)$ denotes the Kronecker product.
$\diag(\ba)$ denotes a diagonal matrix where the $i$-th diagnoal entry is $a_i$.
$\bA * \bB$ denotes the entrywise product between matrices $\bA$ and $\bB$.
We define ``self-tensoring'' $\ba^{\otimes p}:=\ba \otimes \ba^{\otimes (p-1)}\in\mathbb{R}^{m^p}$ where $\ba^{\otimes 1}:=\ba$.
$\bA^{\otimes p}$ indicates replacing each row $\ba_i$ of $\bA$ with $\ba_i^{\otimes p}$.
$\lt(\bM)$ denotes the matrix obtained by only keeping the lower-triangular entries of $\bM$ and zeroing the remaining.
$\mathbf{1}_m\in\mathbb{R}^m$ denotes an all-one vector.

\begin{figure*}[t!]
\hspace{0.4in}
    \begin{subfigure}[t]{0.35\textwidth}
        \centering
        \includegraphics[height=15em]{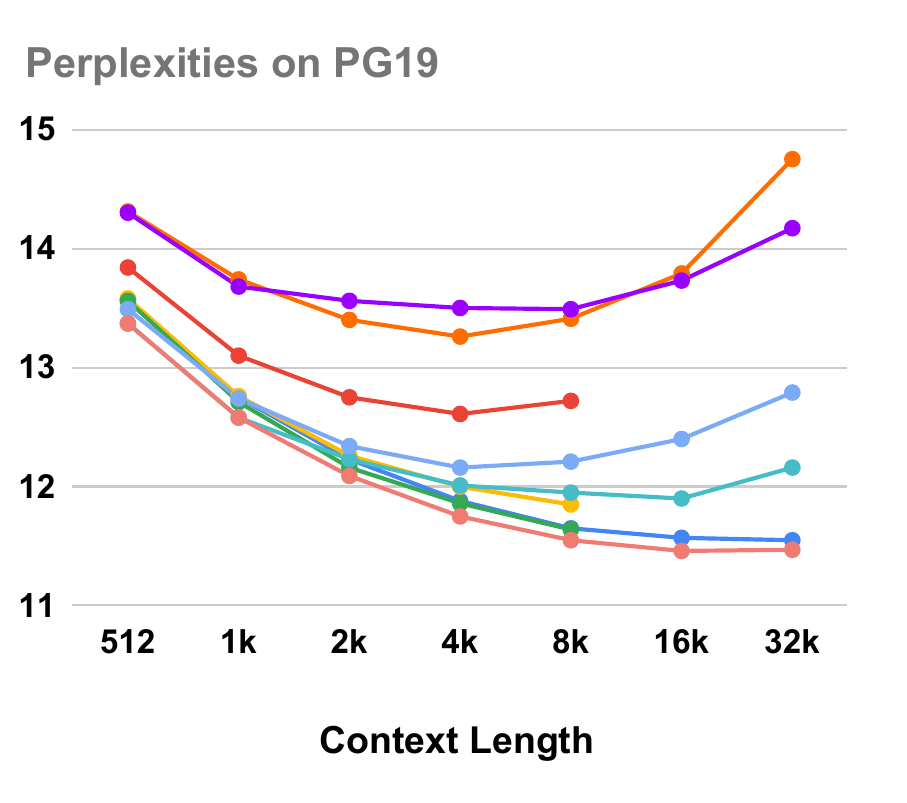}
    \end{subfigure}%
    \begin{subfigure}[t]{0.625\textwidth}
        \centering
        \includegraphics[height=15em]{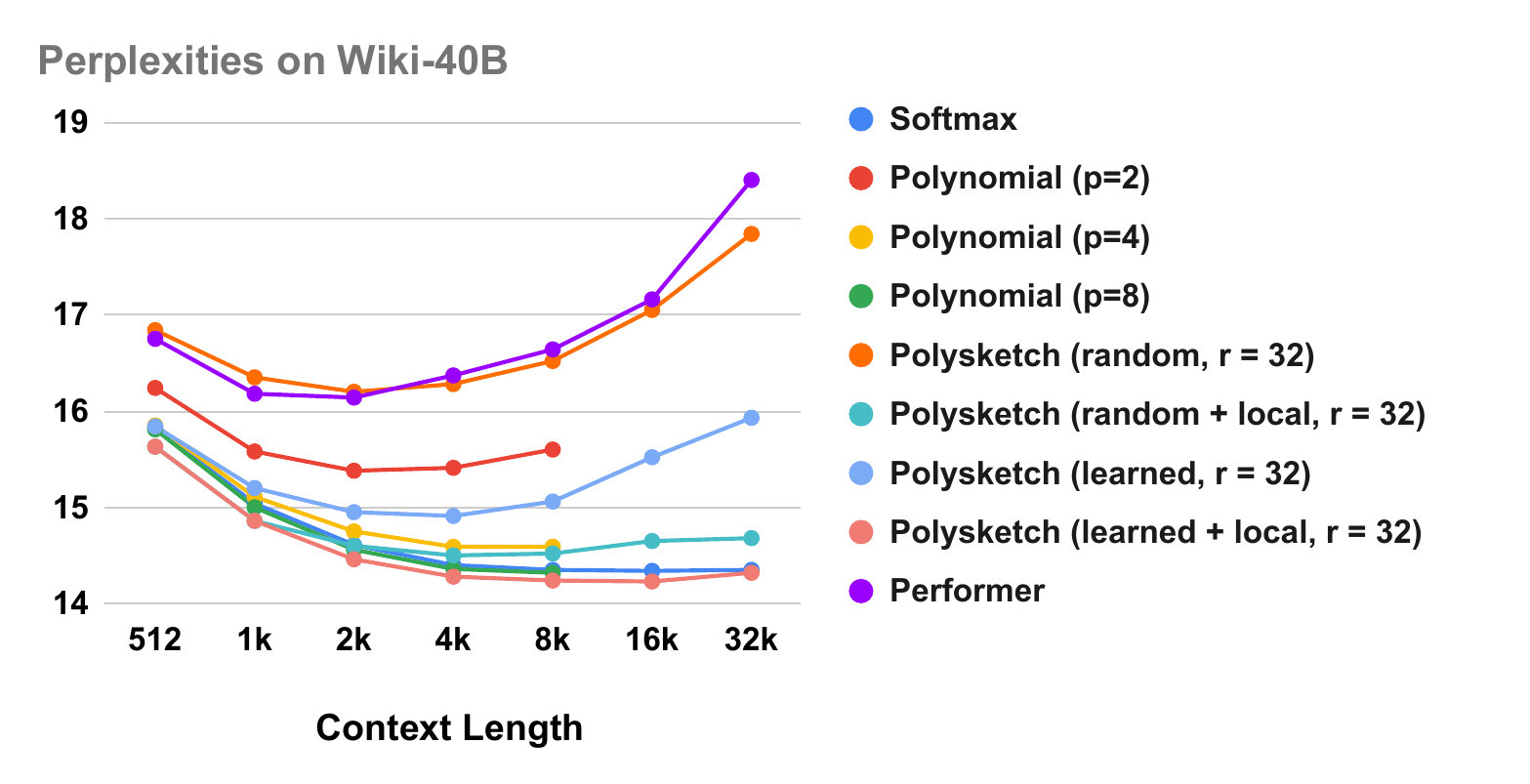}
    \end{subfigure}
    \caption{\textbf{Pre-training metric (perplexities). Lower is better.} GPT-2 small style models with various attention mechanisms are trained on PG-19 and Wiki-40B datasets at different context lengths up to 32k. 
    Each batch contains 1M tokens in total.
    Polynomial attention with $p\geq 4$ has comparable model quality as softmax attention but OOM'ed when context length >8k.
    Polysketch attention (equipped with learned sketches (Section~\ref{sec:learnable_sketch}) + local exact polynomial attention (Section~\ref{sec:local})) consistently outperforms all other mechanisms.
    The parameter $r$ denotes the sketch size (see formal definition in Section~\ref{sec:random_sketch}). See Tables~\ref{tab:ppl-pg19} and \ref{tab:ppl-wiki} in the Appendix for a full list of perplexity values.
    }
    \label{fig:context-length-vs-ppl}
\end{figure*}

\section{Polynomial Attention and Approximation}
We discuss the polynomial attention in more detail in Section \ref{sec:softmax_vs_poly} and introduce the sketching techniques (Section \ref{sec:random_sketch}, \ref{sec:learnable_sketch}) for efficiently approximating the polynomial attention.
We ignore causal masking in this section, and present how to efficiently handle causal masking in Section~\ref{sec:causal_mask}.
\subsection{Softmax versus Normalized Polynomial}\label{sec:softmax_vs_poly}
Let us revisit the softmax attention.
Given sets of query, key vectors $\bQ=\{\bq_i\}_{i\in [n]},\bK=\{\bk_i\}_{i\in [n]}\subset \mathbb{R}^h$, and scaling parameter $\beta>0$, bias parameter $\alpha\in\mathbb{R}$, the normalized softmax attention weight between $\bq_i$ and $\bk_j$ is:
\begin{align*}
\bA_{i,j} = \frac{\exp\left(\langle\bq_i, \bk_j \rangle / \beta - \alpha\right)}{\sum_{j'\in [n]} \exp\left(\langle \bq_i, \bk_{j'} \rangle/\beta - \alpha\right)}.
\end{align*}
Note $\bA_{i,j}$ is invariant in $\alpha$. 
In practice, $\alpha$ is usually chosen to be $\max_{j'\in [n]} \langle \bq_i,\bk_{j'} \rangle / \beta$ to make the computation of both numerator and denominator numerically stable.
$\beta$ is a smoothness factor.
When $\beta\rightarrow \infty$, then $\bA_{i,j}\rightarrow 1/n$, i.e., the $i$-th row of $\bA$ indicates a uniform distribution over all $j\in[n]$.
When $\beta \rightarrow 0$, then $\bA_{i,j}\rightarrow \left\{\begin{array}{ll} 0  & \langle \bq_i, \bk_j \rangle \not= \max_{j'\in [n]} \langle \bq_i, \bk_{j'} \rangle \\ 1~/~a & \langle \bq_i, \bk_j \rangle = \max_{j'\in [n]} \langle \bq_i, \bk_{j'} \rangle \end{array}\right.$ where $a$ is the number of $j$ satisfying $\langle \bq_i, \bk_j \rangle = \max_{j'\in [n]} \langle \bq_i, \bk_{j'} \rangle$, i.e., the $i$-th row of $\bA$ indicates a uniform distribution only over $j$ that provides the maximum inner product.
The standard $\beta$ is chosen to be $\sqrt{h}$~\cite{vaswani2017attention}.

Interestingly, normalized polynomial function has a similar behavior of the interpolation nature between the uniform distribution and the argmax distribution discussed above.
In particular, let $p$ be an even integer and consider the following normalized weight computed between $\bq_i$ and $\bk_j$:
\begin{align}\label{eq:raw_polynomial_attention}
\frac{\left((\langle \bq_i, \bk_j \rangle + \alpha) / \beta\right)^p}{\sum_{j'\in [n]} \left((\langle \bq_i, \bk_{j'} \rangle + \alpha)/\beta\right)^p}.
\end{align}
It is clear the weight is invariant for different $\beta$.
Choosing a proper $\beta$ can make both numerator and denominator fall in a reasonable range and make the computation numerically stable.
When $\alpha\rightarrow \infty$, the weight is close to $1/n$, i.e., these weights provide a uniform distribution.
When $\alpha \geq -\min_{j'\in[n]}\langle \bq_i,\bk_{j'}\rangle$ and $p\rightarrow \infty$, the weight is close to $0$ if $\langle \bq_i,\bk_j\rangle$ does not provide the maximum inner product, and the weight is close to $1/a$ otherwise, where $a$ is the number of $\bk_j$ that provides the maximum inner product.

Observe that if $\langle \bq_i, \mathbf{1}_h \rangle=\langle \bk_{j}, \mathbf{1}_h \rangle=0$ for all $i,j\in [n]$, i.e., entries of $\bq_i,\bk_{j}$ always have mean $0$, then we have $\forall i,j\in[n],$
$
 (\langle \bq_i, \bk_{j} \rangle + \alpha) / \beta = \left\langle \bq'_i, \bk'_{j}\right\rangle
$,
where $\bq'_i = \bq_i / \sqrt{\beta} + \sqrt{\alpha/(\beta h)}\cdot \mathbf{1}_h$, and $\bk'_{j} = \bk_{j} / \sqrt{\beta} + \sqrt{\alpha/(\beta h)}\cdot \mathbf{1}_h$, i.e., $\bq'_i$ and $\bk'_j$ are obtained by applying the same rescaling and bias to $\bq_i$ and $\bk_j$ respectively.
Motivated by the above observation, we slightly tweak Equation~\ref{eq:raw_polynomial_attention} by applying an additional layer normalization\footnote{Layer normalization shifts the entries of the input vector to make them have mean $0$ and learns a suitable bias during training.}~\cite{ba2016layer} to $\{\bq_i\},\{\bk_j\}$, this gives the normalized degree-$p$ polynomial attention weight matrix $\bA^{(p)}$ considered in this paper:
\begin{align*}
\bA^{(p)}_{i,j} = \frac{\langle \bq'_i, \bk'_j\rangle^p}{1 + \sum_{j'\in [n]} \langle \bq'_i,\bk'_{j'}\rangle^p}
\end{align*}
where $\bq'_i,\bk'_j$ are obtained by applying the layer normalization layer to $\bq_i,\bk_j$ respectively.
Unlike softmax attention matrix, it is possible that the term $\sum_{j'\in [n]} \langle \bq'_i, \bk'_{j'} \rangle^p$ is (close to) $0$.
We add $1$ to the denominator to avoid dividing by zero. Given value vectors $\bV = \{\bv_i\}_{i\in[n]}\subset \mathbb{R}^h$, the full degree-$p$ polynomial attention $\Attn^{(p)}(\bQ,\bK,\bV)=\bA^{(p)} \cdot \bV=\bD^{-1}\cdot (\bQ'{\bK'}^{\top})^p\cdot \bV$, where $\bD=\diag(\mathbf{1}_n+(\bQ'{\bK'}^{\top})^p \mathbf{1}_n)$.
In the rest of the paper, we abuse notation between $\bQ,\bK$ and $\bQ',\bK'$, and only consider $\bQ,\bK$ after layer normalization.

As presented in Figure~\ref{fig:context-length-vs-ppl} and other experiments in Section~\ref{sec:exp} and Appendix, the models with the degree-$p$ polynomial attention described above achieve comparable model quality as vanilla softmax attention on all metrics as long as $p\geq 4$. To test the long range learning capabilities and in-context learning capabilities of attention mechanisms, we study the synthetic tasks of Selective Copying \cite{gu2023mamba} and Induction heads \cite{olsson2022context}. 
The models with polynomial attention for $p \ge 4$ perform as well as models with softmax attention (see Appendix~\ref{apx:synthetic} for more details). 
    \subsection{Random Sketches for Polynomial Attention with Theoretical Gaurantees}\label{sec:random_sketch}
To compute $\Attn^{(p)}(\bQ, \bK, \bV)$, we only need to compute $(\bQ\bK^\top)^p\cdot\bV$ and $(\bQ\bK^\top)^p\cdot \mathbf{1}_n$.
Let us only focus on computing $(\bQ\bK^\top)^p\cdot\bV$ since we can handle $(\bQ\bK^\top)^p\cdot \mathbf{1}_n$ in the same way.
Due to a well-known fact $\forall \bx,\by, \langle \bx,\by\rangle^p = \langle \bx^{\otimes p}, \by^{\otimes p}\rangle$, we have $(\bQ\bK^\top)^p\bV = \bQ^{\otimes p} (\bK^{\otimes p})^\top\cdot \bV$.
If we reorder the computation and compute $(\bK^{\otimes p})^\top\cdot \bV$ first, we are able to compute $\bQ^{\otimes p} \cdot (\bK^{\otimes p})^\top\cdot \bV$ in $O(nh^{p+1})$ time which is linear in the context length $n$. 
However $h^{p+1}$ dependence is still expensive as we explained in Section~\ref{sec:our_contribution}.
Thus, we resort to approximating $\bQ^{\otimes p} (\bK^{\otimes p})^\top$ using sketching techniques, which we formally describe ahead. We first state the definition of a sketch that has the ``Approximate Matrix Multiplication (AMM)'' guarantee.
\begin{definition}[Approximate Matrix Multiplication~\cite{woodruff2014sketching}]
    \label{def:amm}
    Given parameters $n$, $h$ and $p$, a randomized sketching matrix $\bS \in \R^{h^p \times r}$ has the $(\varepsilon,p)$-AMM property if given any two $n \times h$ matrices $\bA$ and $\bB$, with probability $\ge 9/10$  over the randomized sketching matrix $\bS$, we have that
    $
        \frnorm{(\bA^{\otimes p}\bS)\T{(\bB^{\otimes p}\bS)} - \bA^{\otimes p}\T{(\bB^{\otimes p})}} \le \varepsilon\frnorm{\bA^{\otimes p}}\frnorm{\bB^{\otimes p}}.
    $ 
\end{definition}
The parameter $r$ above is referred to as the \emph{sketch size}. 
Two important properties of a sketching distribution are (i) the sketch size $r$ as a function of the accuracy parameter $\varepsilon$ and (ii) the time required to compute $\bA^{\otimes p}\bS$ given an arbitrary matrix $\bA$. Ideally, we want the matrix $\bS$ to have a structure such that $\bA^{\otimes p}\bS$ can be computed without realizing the large matrix $\bA^{\otimes p}$. \citet{ahle2020oblivious} gave constructions of different sketches that have both the properties that the sketch size $r$ is small and the matrix $\bA^{\otimes p}\bS$ can be computed quickly. We describe the main properties of one of their sketches below and explain how to compute $\bA^{\otimes p}\bS$.
\begin{theorem}[\cite{ahle2020oblivious}]\label{thm:previous_sketch}
    Given $p$ and $\varepsilon$, there is a sketching matrix $\bS$ with $r =  \Theta(p/\varepsilon^2)$ columns such that $\bS$ satisfies the $(\varepsilon, p)$-AMM property (Definition~\ref{def:amm}). Given an arbitrary vector $\ba \in \R^{h}$, computing $(\ba^{\otimes p})^\top\bS$ only requires $p$ matrix-vector multiplications of matrix size $h\times r$, $(p-2)$ matrix-vector multiplications of matrix size $r\times r$, and $(p-1)$ Hadamard products of $r$-dimensional vectors.
\end{theorem}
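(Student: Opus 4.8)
The plan is to realize the sketch $\bS$ implicitly as a composition of cheap degree-$2$ ``base'' sketches arranged in a full binary tree, and to control its error by a single-pair second-moment (variance) bound that I then upgrade to the Frobenius-norm AMM guarantee via the standard reduction. Recall that a sketch has the $(\varepsilon,p)$-AMM property of Definition~\ref{def:amm} once it has the following second-moment property: for all $\ba,\bb\in\R^h$ the estimator $\widehat{\langle\ba,\bb\rangle^p}:=\langle \bS^\top\ba^{\otimes p},\,\bS^\top\bb^{\otimes p}\rangle$ is unbiased, $\mathbb{E}[\widehat{\langle\ba,\bb\rangle^p}]=\langle\ba^{\otimes p},\bb^{\otimes p}\rangle=\langle\ba,\bb\rangle^p$, and has variance at most $(\varepsilon^2/C)\|\ba\|_2^{2p}\|\bb\|_2^{2p}$ for a suitable constant $C$. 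Indeed, the $(i,j)$ entry of $(\bA^{\otimes p}\bS)(\bB^{\otimes p}\bS)^\top$ is exactly $\widehat{\langle\ba_i,\bb_j\rangle^p}$, so by unbiasedness and summing per-entry variances, $\mathbb{E}\,\frnorm{(\bA^{\otimes p}\bS)(\bB^{\otimes p}\bS)^\top-\bA^{\otimes p}(\bB^{\otimes p})^\top}^2\le (\varepsilon^2/C)\,(\sum_i\|\ba_i\|_2^{2p})(\sum_j\|\bb_j\|_2^{2p})=(\varepsilon^2/C)\frnorm{\bA^{\otimes p}}^2\frnorm{\bB^{\otimes p}}^2$, and Markov's inequality with $C=10$ yields the AMM bound with probability $\ge 9/10$. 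The whole task therefore reduces to building an unbiased estimator of $\langle\ba,\bb\rangle^p$ with relative variance $O(\varepsilon^2)$ and sketch size $\Theta(p/\varepsilon^2)$.

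For the construction I would take a full binary tree with $p$ leaves. Each leaf applies an independent linear map $\mathbf{T}_i\in\R^{h\times r}$ to the input, producing $\ba^\top\mathbf{T}_i\in\R^r$; these are the $p$ matrix--vector products of size $h\times r$. Each internal node takes the two $r$-dimensional vectors $\mathbf{u},\mathbf{w}$ returned by its children and emits $\mathbf{u}*\mathbf{w}$, preceded by an independent $r\times r$ randomizing map applied to one child so that its output remains an unbiased sketch of the Kronecker product of the children's underlying tensors (the base sketch is a degree-$2$ oblivious sketch such as TensorSRHT, for which the standard analysis gives, conditioned on its children, $\mathbb{E}[\langle Q(\bx_1,\bx_2),Q(\by_1,\by_2)\rangle]=\langle\bx_1,\by_1\rangle\langle\bx_2,\by_2\rangle$ and conditional relative variance $O(1/r)$). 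This is one Hadamard product per internal node. Since the tree is full it has exactly $p-1$ internal nodes, hence $p-1$ Hadamard products, and each internal node except the root passes its output upward through one $r\times r$ map, giving $p-2$ such multiplications; the root needs no further map as its output is the final sketch. This matches the operation count in the statement, and the accounting depends only on the numbers of leaves and internal nodes, not on the tree shape.

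The heart of the argument, and the step I expect to be the main obstacle, is propagating the variance bound up the tree and showing it accumulates \emph{linearly} in $p$ rather than exponentially, which is exactly what a single flat degree-$p$ TensorSketch fails to achieve. Let a node estimate $\alpha=\langle\ba^{\otimes k},\bb^{\otimes k}\rangle$ from its (independent) left subtree and $\beta=\langle\ba^{\otimes m},\bb^{\otimes m}\rangle$ from its right subtree. Writing $\hat v$ for the relative variance (variance divided by $\|\ba\|_2^{2d}\|\bb\|_2^{2d}$ at degree $d$), the law of total variance combined with the base-sketch guarantee and Cauchy--Schwarz ($\alpha^2\le\|\ba\|_2^{2k}\|\bb\|_2^{2k}$, etc.) gives the recursion $\hat v_{\mathrm{out}}\le \hat v_{\mathrm{left}}+\hat v_{\mathrm{right}}+\hat v_{\mathrm{left}}\hat v_{\mathrm{right}}+O(1/r)$. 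Starting from $\hat v=O(1/r)$ at the leaves, this telescopes over the $p-1$ internal nodes to $\hat v_{\mathrm{root}}=O(p/r)$, with the quadratic cross terms absorbed because $r\gg p$; choosing $r=\Theta(p/\varepsilon^2)$ makes $\hat v_{\mathrm{root}}=O(\varepsilon^2)$, the single-pair property fed into the reduction above. The delicate points are establishing the per-node relative-variance bound in a form that composes (so that $\|\mathbf{u}\|_2^2$ concentrates around $\|\ba^{\otimes k}\|_2^2$) and verifying the cross terms stay lower order throughout, both of which hold once $r=\Omega(p)$, as the target sketch size guarantees. Finally, for $p$ not a power of two I would use any full binary tree of depth $\lceil\log_2 p\rceil$; since the counts are shape-independent, the stated operation count is unchanged.
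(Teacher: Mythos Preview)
Your plan matches what the paper does: the paper does not reprove this theorem but describes the recursive construction (Algorithm~\ref{alg:polysketch}) and defers the analysis to \cite{ahle2020oblivious}; your binary-tree composition of degree-$2$ base sketches, with a variance bound that telescopes linearly in $p$ and is converted to the Frobenius AMM guarantee by Markov, is exactly that argument.

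One point to tighten, because as written it is internally inconsistent and one reading breaks the proof. You first say each internal node applies an $r\times r$ map ``to one child,'' and later that ``each internal node except the root passes its output upward through one $r\times r$ map.'' Only the second is correct and matches both the operation count and Algorithm~\ref{alg:polysketch}: every non-root node's output is freshly projected before its parent Hadamards---leaves through $h\times r$ maps ($p$ of them), non-root internal nodes through $r\times r$ maps ($p-2$ of them)---so at each internal node \emph{both} inputs carry fresh randomness. If literally only one child were re-randomized, the conditional relative variance at that node would not be $O(1/r)$: for the one-sided estimator $\langle \mathbf{u}*(\mathbf{G}\mathbf{w}),\,\mathbf{u}'*(\mathbf{G}\mathbf{w}')\rangle$ the variance scales with $\sum_i u_i^2{u'_i}^2$, which can be as large as $\|\mathbf{u}\|_2^2\|\mathbf{u}'\|_2^2$ rather than $\|\mathbf{u}\|_2^2\|\mathbf{u}'\|_2^2/r$, and your recursion $\hat v_{\mathrm{out}}\le \hat v_{\mathrm{left}}+\hat v_{\mathrm{right}}+\hat v_{\mathrm{left}}\hat v_{\mathrm{right}}+O(1/r)$ would fail at that node. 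With both sides freshly projected, the recursion is right, and your acknowledged ``delicate points'' (that $\|\mathbf{u}\|_2^2$ must concentrate near $\|\ba^{\otimes k}\|_2^2$ so the $O(1/r)$ term stays $O(1/r)$ in relative units up the tree) are precisely where the work in \cite{ahle2020oblivious} goes; the bound $r=\Omega(p)$ you invoke is what keeps the multiplicative blow-up of those norms to a constant factor over $O(\log p)$ levels.
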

To compute $\bA^{\otimes p}\bS$, we only need to compute $(\ba_i^{\otimes p})^\top \bS$ for each row $\ba_i$ of $\bA$. The number of matrix-vector multiplications and Hadamard products scales linearly in $n$.
Let us focus on the construction of the sketch described in Theorem~\ref{thm:previous_sketch}.
We now explain how the sketch computation works for $p=2$ and how it is extended to general values of $p$ that are powers of $2$.
Let $\bG_1\in \mathbb{R}^{h\times r}$ and $\bG_2\in\mathbb{R}^{h\times r}$ denote two independently sampled random Gaussian matrices, i.e., each entry is drawn indepenently from a standard Gaussian distribution.
Then the outcome of applying the sketch on $\bA^{\otimes 2}$ is $\bA^{\otimes 2}\bS=\sqrt{1 / r} \cdot [(\bA\bG_1) * (\bA\bG_2)]$.
The construction extends to all $p$ that are powers of $2$ in a recursive way. $\textsc{PolySketchWithNegativity}(\bA,r,p)$ (Algorithm~\ref{alg:polysketch}) shows how to compute $\bA^{\otimes p}\bS$ in general.

\begin{algorithm}[t]
\small
\caption{Polynomial Sketches}
\label{alg:polysketch}
\begin{algorithmic}
\FUNCTION{\textsc{PolySketchWithNegativity}$(\bA\in\mathbb{R}^{k\times m}, r, p)$}
\STATE // Implementation of Theorem~\ref{thm:previous_sketch}~\cite{ahle2020oblivious}.
\STATE // The output computes $\bA^{\otimes p}\bS$.
\STATE If $p=1$, return $\bA$
\STATE $\bM_1$ = \textsc{PolySketchWithNegativity$(\bA, r, p/2)$}
\STATE $\bM_2$ = \textsc{PolySketchWithNegativity$(\bA, r, p/2)$}
\STATE Sample Gaussian matrices $\bG_1,\bG_2$, each of $r$ columns
\STATE Return $\sqrt{1/r}\cdot [(\bM_1\bG_1) *  (\bM_2\bG_2)]\in\mathbb{R}^{k\times r}$
\ENDFUNCTION
\FUNCTION{\textsc{PolySketchNonNegative}$(\bA\in\mathbb{R}^{k\times m}, r, p)$}
\STATE // Our approach based on Theorem~\ref{thm:tensoring-preserves-amm}.
\STATE // The output computes $\phi'(\bA)=(\bA^{\otimes (p/2)}\bS)^{\otimes 2}$ where $\phi'(\cdot)$ is the same mapping as mentioned in Theorem~\ref{thm:main_in_intro}.
\STATE $\bM$ = \textsc{PolySketchWithNegativity$(\bA, r, p/2)$}
\STATE Return $\bM^{\otimes 2}\in\mathbb{R}^{k\times r^2}$.
\ENDFUNCTION
\end{algorithmic}
\end{algorithm}
The polynomial sketch described above can be used to approximate the matrix $(\bQ \T{\bK})^p = \bQ^{\otimes p}\T{(\bK^{\otimes p})}$ with $(\bQ^{\otimes p}\bS)\T{(\bK^{\otimes p}\bS)}$. 
However one issue is that they do not preserve nonnegativity: while for even $p$, the entries of the matrix $(\bQ \T{\bK})^p$ are  nonnegative, the entries of the matrix $(\bQ^{\otimes p}\bS)\T{(\bK^{\otimes p}\bS)}$ can be negative.
This is not desired as discussed in Section~\ref{sec:our_contribution}.
In the following, we propose a novel but simple approach to address this negativity issue.

Consider two arbitrary vectors $\ba, \bb$, we can see that the dot product $\la \ba^{\otimes 2}, \bb^{\otimes 2}\ra = \la \ba, \bb\ra^2 \ge 0$. Thus, given matrices $\bQ^{\otimes (p/2)}\bS$ and $\bK^{\otimes (p/2)}\bS$, consider the matrix $(\bQ^{\otimes (p/2)}\bS)^{\otimes 2}\T{((\bK^{\otimes (p/2)}\bS)^{\otimes 2})}$. Since all the entries of the matrix are of the form $\la \ba^{\otimes 2}, \bb^{\otimes 2}\ra$ for some vectors $\ba$, $\bb$, all the entries of the matrix $(\bQ^{\otimes (p/2)}\bS)^{\otimes 2}\T{((\bK^{\otimes (p/2)}\bS)^{\otimes 2})}$ are nonnegative as well. The ``self-tensoring'' trick ensures that all the entries in the approximate attention matrix are nonnegative at the cost of \emph{squaring} the sketch size $r$.

Although $(\bQ^{\otimes (p/2)}\bS)^{\otimes 2}\T{((\bK^{\otimes (p/2)}\bS)^{\otimes 2})}$ guarantees non-negative property, it is not clear whether it is still a good approximation to $(\bQ\bK^{\top})^p$ given that $\bS$ is a polynomial sketch for degree $p/2$.
One of our technical contributions is to provide a non-trivial analysis to show that it still provides a good approximation when the sketching matrix $\bS$ is constructed as in \cite{ahle2020oblivious}.
The key is Theorem~\ref{thm:tensoring-preserves-amm} which shows that a degree $p/2$ polynomial sketch followed by ``self-tensoring'' gives a degree $p$ polynomial sketch. 

To state Theorem~\ref{thm:tensoring-preserves-amm} properly, we need to briefly introduce following concepts.
The $(\varepsilon, \delta, t)$-JL moment property is defined as follows. 
Given a scalar random variable $\bX$ and $t \ge 1$, $\|\bX\|_{L^t}$ is defined to be $\E[|\bX|^t]^{1/t}$. $\|\cdot\|_{L^t}$ defines a norm over random variables defined over the same sample space and in particular satisfies $\|\bX + \bY\|_{L^t} \le \|\bX\|_{L^t} + \|\bY\|_{L^t}$.
\begin{definition}[JL-moment property~\cite{woodruff2014sketching}] Given $\varepsilon, \delta \ge 0,t \ge 1$, a random matrix $\bS^{m \times r}$ has the $(\varepsilon, \delta, t)$-JL moment property if for any $\bx \in \R^m$ with $\opnorm{\bx}=1$,
$
   \left\|\opnorm{\T{\bx}\bS}^2 - 1\right\|_{L^t} \le \varepsilon\cdot\delta^{1/t}.
$
\end{definition}

\begin{theorem}
Let $\bS \in \mathbb{R}^{h^{p/2} \times r}$ be a random sketch  satisfying the $(\varepsilon, \delta, t)$-JL moment and $(\varepsilon, \delta, 2t)$-JL moment properties for some even integer $t$. Given matrices $\bC,\bD$ with $h^{p/2}$ columns,
$
    \frnorm{(\bC\bS)^{\otimes 2}\T{((\bD\bS)^{\otimes 2})} - \bC^{\otimes 2}\T{(\bD^{\otimes 2})}} \le \sqrt{5}\varepsilon\frnorm{\bC^{\otimes 2}}\frnorm{\bD^{\otimes 2}}
$
 holds with probability $\ge 1 - \delta$,
\label{thm:tensoring-preserves-amm}
\end{theorem}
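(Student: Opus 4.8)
The plan is to reduce the Frobenius-norm guarantee to an entrywise, moment-based estimate, exploiting the fact that ``self-tensoring'' turns every entry of the approximate product into the \emph{square} of a sketched inner product. Write $\mathbf{c}_i$ (resp.\ $\mathbf{d}_j$) for the $i$-th (resp.\ $j$-th) row of $\bC$ (resp.\ $\bD$), and set $u_{ij} := \langle\mathbf{c}_i\bS,\mathbf{d}_j\bS\rangle$ and $v_{ij} := \langle\mathbf{c}_i,\mathbf{d}_j\rangle$. By the identity $\langle\ba^{\otimes 2},\bb^{\otimes 2}\rangle = \langle\ba,\bb\rangle^2$ (the same fact invoked in Section~\ref{sec:random_sketch}), the $(i,j)$ entry of $(\bC\bS)^{\otimes 2}\T{((\bD\bS)^{\otimes 2})}$ is $u_{ij}^2$ and the $(i,j)$ entry of $\bC^{\otimes 2}\T{(\bD^{\otimes 2})}$ is $v_{ij}^2$. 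Hence, writing $Z := \frnorm{(\bC\bS)^{\otimes 2}\T{((\bD\bS)^{\otimes 2})} - \bC^{\otimes 2}\T{(\bD^{\otimes 2})}}^2 = \sum_{i,j}\big(u_{ij}^2 - v_{ij}^2\big)^2$, it suffices to show $\Pr\big[Z > 5\varepsilon^2\frnorm{\bC^{\otimes 2}}^2\frnorm{\bD^{\otimes 2}}^2\big] \le \delta$. I will also use $\frnorm{\bC^{\otimes 2}}^2 = \sum_i\|\mathbf{c}_i\|_2^4$ (and likewise for $\bD$) together with the trivial bound $|v_{ij}| \le \|\mathbf{c}_i\|_2\|\mathbf{d}_j\|_2$.

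The first technical step is a per-entry $L^t$ bound. By the standard polarization argument, the $(\varepsilon,\delta,s)$-JL moment property implies $\big\|\langle\bx\bS,\by\bS\rangle - \langle\bx,\by\rangle\big\|_{L^s} \le \varepsilon\delta^{1/s}\|\bx\|_2\|\by\|_2$ for all $\bx,\by$; applying this with $s=t$ and $s=2t$ (which is exactly why both JL moment hypotheses are needed) gives $\|u_{ij}-v_{ij}\|_{L^t} \le \varepsilon\delta^{1/t}\|\mathbf{c}_i\|_2\|\mathbf{d}_j\|_2$ and $\|u_{ij}-v_{ij}\|_{L^{2t}} \le \varepsilon\delta^{1/(2t)}\|\mathbf{c}_i\|_2\|\mathbf{d}_j\|_2$. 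I then split $u_{ij}^2 - v_{ij}^2 = (u_{ij}-v_{ij})^2 + 2v_{ij}(u_{ij}-v_{ij})$ and bound its $L^t$ norm by the triangle inequality, using $\|(u_{ij}-v_{ij})^2\|_{L^t} = \|u_{ij}-v_{ij}\|_{L^{2t}}^2$ for the first term and pulling the constant $v_{ij}$ out of the second; this yields $\|u_{ij}^2 - v_{ij}^2\|_{L^t} \le (\varepsilon^2 + 2\varepsilon)\,\delta^{1/t}\,\|\mathbf{c}_i\|_2^2\|\mathbf{d}_j\|_2^2 \le \sqrt 5\,\varepsilon\,\delta^{1/t}\,\|\mathbf{c}_i\|_2^2\|\mathbf{d}_j\|_2^2$, the last inequality using that $\varepsilon$ is below a small absolute constant (larger $\varepsilon$ only affects the constant). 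The point of the $(u-v)^2 + 2v(u-v)$ split, rather than a Hölder bound on $u^2-v^2$ with mismatched exponents, is that it produces a single power of $\delta^{1/t}$, which is what makes the final Markov step lossless in $\delta$.

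The second step assembles these into a tail bound. Since $t$ is an even integer, $t/2 \ge 1$, so $\|\cdot\|_{L^{t/2}}$ is a genuine norm and Minkowski's inequality gives
\[
\|Z\|_{L^{t/2}} \le \sum_{i,j}\big\|(u_{ij}^2-v_{ij}^2)^2\big\|_{L^{t/2}} = \sum_{i,j}\|u_{ij}^2-v_{ij}^2\|_{L^t}^2 \le 5\varepsilon^2\delta^{2/t}\sum_{i,j}\|\mathbf{c}_i\|_2^4\|\mathbf{d}_j\|_2^4 = 5\varepsilon^2\delta^{2/t}\,\frnorm{\bC^{\otimes 2}}^2\frnorm{\bD^{\otimes 2}}^2.
\]
Applying Markov's inequality to $Z^{t/2}$ with threshold $5\varepsilon^2\frnorm{\bC^{\otimes 2}}^2\frnorm{\bD^{\otimes 2}}^2$ gives failure probability at most $\big(\delta^{2/t}\big)^{t/2} = \delta$, and taking square roots on the complementary event yields the stated inequality.

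The main obstacle is the nonlinearity created by self-tensoring: because each entry of the approximate product is a \emph{squared} sketched inner product, the error is a difference of squares rather than the difference of (essentially unbiased) inner products to which the usual ``JL-moment $\Rightarrow$ approximate matrix multiplication'' lemma applies directly. Controlling a difference of squares forces control of a \emph{product} of two approximation errors, which is precisely why one JL-moment hypothesis does not suffice and the $2t$-moment property must be brought in; the remaining work is careful bookkeeping so that the powers of $\delta$ cancel and one application of Markov on the $t/2$-th moment gives failure probability exactly $\delta$. A secondary point is that the argument relies on $\bC^{\otimes 2}$ and $\bD^{\otimes 2}$ having rank-one-tensor rows, so that $\langle\mathbf{c}_i^{\otimes 2},\mathbf{d}_j^{\otimes 2}\rangle$ collapses to $\langle\mathbf{c}_i,\mathbf{d}_j\rangle^2$ and its sketched analogue to $\langle\mathbf{c}_i\bS,\mathbf{d}_j\bS\rangle^2$; one cannot simply treat $\bS\otimes\bS$ as a black-box sketch on all of $\mathbb{R}^{h^p}$, which would require a much stronger hypothesis on $\bS$.
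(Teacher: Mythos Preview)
Your proof is correct and follows essentially the same strategy as the paper: reduce to entrywise control, invoke the JL-moment-to-inner-product lemma at both moments $t$ and $2t$, aggregate via the triangle inequality for $\|\cdot\|_{L^{t/2}}$, and finish with Markov on the $(t/2)$-th moment. The only difference is cosmetic: you decompose $u^2-v^2=(u-v)^2+2v(u-v)$ and bound $\|u^2-v^2\|_{L^t}$ directly before squaring, whereas the paper expands $(u^2-v^2)^2=((u-v)+2v)^2(u-v)^2$ pointwise and applies a parametric Young inequality $(x+y)^2\le(1+C)x^2+(1+1/C)y^2$ with $C=1/\varepsilon$; both routes land on the same $5\varepsilon^2\delta^{2/t}\|\mathbf{c}_i\|_2^4\|\mathbf{d}_j\|_2^4$ per-entry bound under a comparable small-$\varepsilon$ assumption, and your version is arguably a bit cleaner.
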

Due to the page limit, we defer the proof to Appendix~\ref{sec:proof_of_tensoring}.
\paragraph{Proof of Theorem~\ref{thm:main_in_intro}.}
Results from Section~4 of \cite{ahle2020oblivious} implies that the polynomial sketch $\bS$ as mentioned in Theorem~\ref{thm:previous_sketch} for degree $p/2$ with sketch size $r = \Theta(p/\varepsilon^2)$ satisfies the requirements of Theorem~\ref{thm:tensoring-preserves-amm}.
By plugging $\bQ^{\otimes (p/2)}$, $\bK^{\otimes (p/2)}$ into $\bC,\bD$ of Theorem~\ref{thm:tensoring-preserves-amm} respectively and scaling $\varepsilon$ properly, we obtain $\frnorm{(\bQ^{\otimes (p/2)}\bS)^{\otimes 2}\T{((\bK^{\otimes (p/2)}\bS)^{\otimes 2})} - (\bQ\bK^\top)^p}\leq \varepsilon\frnorm{\bQ^{\otimes p}}\frnorm{\bK^{\otimes p}}$ which concludes Theorem~\ref{thm:main_in_intro}, i.e., the approximate feature mapping $\phi'(\bx)=((\bx^{\otimes (p/2)})^{\top}\bS)^{\otimes 2}\in\mathbb{R}^{r^2}$ and $\phi'(\bQ),\phi'(\bK)$ can be efficiently computed using \textsc{PolySketchNonNegative}$(\cdot,r,p)$ (see Algorithm~\ref{alg:polysketch}).

Using $\phi'(\cdot)$,
we get the following approximate polynomial attention $\widetilde{\Attn}^{(p)}(\bQ,\bK,\bV)= \tilde{\bD}^{-1} \phi'(\bQ) \phi'(\bK)^\top \bV$, where $\tilde{\bD}=\diag(\mathbf{1}_n+\phi'(\bQ) \phi'(\bK)^\top \mathbf{1}_n)$.
We call this attention mechanism Polysketch attention.
    \subsection{Learnable Sketches for Polynomial Attention}\label{sec:learnable_sketch}
There are only $(p-2)$ random projections where each is introduced by a matrix multiplication with a small Gaussian matrix ($\bG_1,\bG_2$ in each recursion call in Algorithm~\ref{alg:polysketch}) of size either $h\times r$ or $ r\times r$ during the recursive computation of $\phi'(\bX)$ = \textsc{PolySketchNonNegative}$(\bX, r, p)$ (Algorithm~\ref{alg:polysketch}) for $\bX\in\mathbb{R}^{n\times h}$. 
Inspired by the literature of learned sketches~\cite{hsu2019learning,aamand2019learned}, a natural idea is to replace each random matrix $\bG_1$, $\bG_2$ in Algorithm~\ref{alg:polysketch} with learnable parameters. 
In practice, we found that replacing each of these random projections with a learnable \emph{non-linear} transformation introduced by a dense neural network with size comparable to $\bG_1,\bG_2$ achieves a better model quality.
We describe more details of our network structure for the learnable non-linear transformation in Appendix~\ref{sec:network_structure_for_learnable_sketch}.
We also evaluate models with Polysketch attention with learned sketches on induction heads and selective copying synthetic tasks. We find that the models perform as well as models with softmax attention (See Appendix~\ref{apx:synthetic}).

\section{Dealing with Causal Masks}\label{sec:causal_mask}
When considering causal masks, the Polysketch attention with respect to $\bq_i$ is 
$
\underset{j\leq i}{\sum}\frac{ \langle \phi'(\bq_i), \phi'(\bk_j) \rangle}{1 + \sum_{j'\leq i} \langle \phi'(\bq_i), \phi'(\bk_{j'}) \rangle}\cdot \bv_j^\top.
$
In this causal case, the full Polysketch attention can be written as 
$
\widetilde{\Attn}^{(p)}(\bQ, \bK, \bV) = \tilde{\bD}^{-1}\cdot \lt(\phi'(\bQ)\phi'(\bK)^\top)\cdot \bV
$ where $\tilde{\bD}= \diag(\mathbf{1}_n + \lt(\phi'(\bQ)\phi'(\bK)^\top)\cdot \mathbf{1}_n)$.
Therefore, it is crucial to efficiently compute $\lt(\phi'(\bQ)\phi'(\bK)^\top)\cdot \bX$ for $\bX\in \{\mathbf{1}_n, \bV\}$.
In the next subsection, we present a block based algorithm to compute $\lt(\bA\cdot \bB)\cdot \bC$ for arbitrary matrices $\bA,\bB\in\mathbb{R}^{n\times m}, \bC\in\mathbb{R}^{n\times k} $ in time linear in $n$, while the number of sequentially dependent steps is small.
    \subsection{Fast Lower Triangular Multiplication}\label{sec:fast_lt}
\begin{figure}[t]
    \centering
\includegraphics[width=1.1\linewidth]{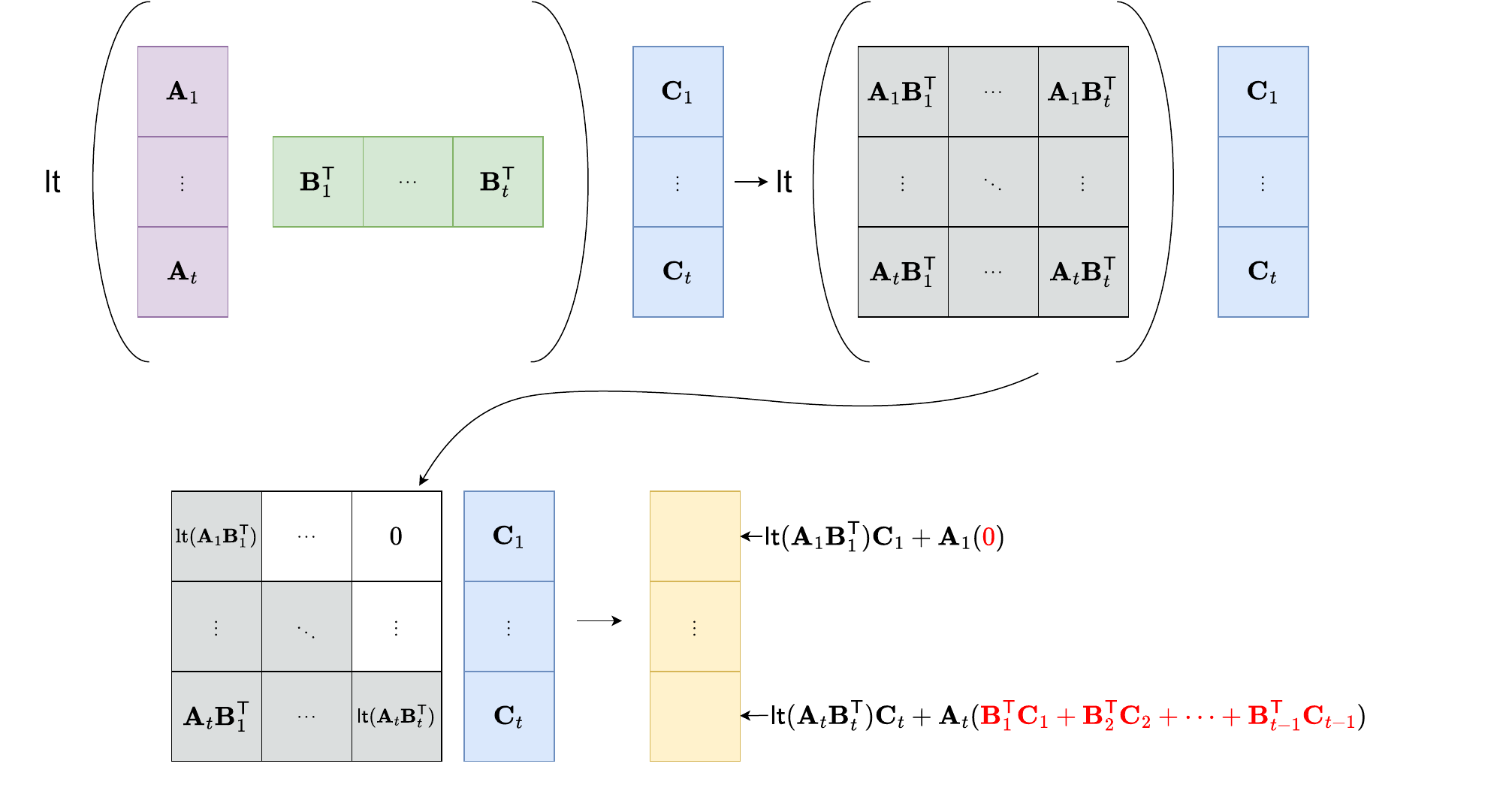}
\caption{
\textbf{Block wise Lower Triangular Multiplication.} $\bA_l$, $\bB_l$, $\bC_l$ are blocks of $\bA$, $\bB$, $\bC$. Each block has $b = n/t$ rows.}
    \label{fig:lt-multiply-drawio}
\end{figure}
Let $b$ be the block size and $t = n/b$ be the number of blocks where each block $B_l$ $(l\in [t])$ contains indices $\{(l-1) b+1,(l-1) b+2,\cdots,l\cdot b\}$.
Let $\ba_i,\bb_i,\bc_i$ denote the $i$-th row vector of $\bA,\bB,\bC$ respectively.
For each $l\in[t]$, let the rows of $\bA_l\in\mathbb{R}^{b\times m}$ consist of $\ba_i$ where $i\in B_l$.
We define sub-matrices $\bB_l,\bC_l$ of $\bB,\bC$ respectively in a similar way.
For $l\in [t]$, let us compute $\bH_l = \sum_{i\in B_l} \bb_i \bc_i^\top$. 
Let $\bZ_l$ indicates the prefix sum: $\bZ_l=\sum_{j< l} \bH_j$.
In addition, let us compute $\bP_l = \lt(\bA_l\bB_l^\top) \bC_l$ for each $l\in [t]$ in the direct way.
For any $l\in [t]$, and any $i\in B_l$, if $i$ is the $i'$-th index within the block $B_l$, it is easy to verify that the $i$-th row of $\lt(\bA\bB^\top)\bC$ can be obtained by $\bp +\ba_i^\top \bZ_{l}$ where $\bp$ is the $i'$-th row of $\bP_l$.
Figure~\ref{fig:lt-multiply-drawio} justifies the correctness of the above algorithm.

Since the prefix sum $\bZ_t$ is over $t$ matrices, the number of sequentially dependent steps is $t$. We can further reduce the number of sequential steps by using a parallel prefix sum algorithm \cite{blelloch1990prefix} to exploit the parallelism. In our implementation, we only use the sequential prefix sum algorithm.
Computing all $\bP_l$ requires $O(t\cdot b^2(m + k))$ time.
Computing all $\bH_l,\bZ_l$ requires $O(t\cdot bmk)$ time.
Computing all $\bA_l \bZ_l+\bP_l$ requires $O(t\cdot bmk)$ time.
Therefore, the overall running time is $O(nb(m+k))$.
When we set $b$ as a constant, the running time is linear in $n$. 
If we directly plugging $\phi'(\bQ), \phi'(\bK), \bV$ (or $\mathbf{1}_n$) into $\bA,\bB,\bC$ above respectively, we compute causal Polysketch attention in time linear in the context length, $n$.

Let us take another look using the above process to compute Polysketch attention, the matrix $\bP_l$ actually corresponds to $\lt(\phi'(\bQ)_l \phi'(\bK)_l^\top) \bX_l$ where $\bX_l\in\{\bV_l, \mathbf{1}_b\}$.
$\phi'(\bQ)_l$, $\phi'(\bK)_l$ corresponds to approximate feature mapped query and key vectors within the block $B_l$, and $\bV_l$ corresponds to the value vectors in $B_l$.
Let $\bQ_l,\bK_l$ be the corresponding original query and key vectors in $B_l$.
One observation is that $\phi'(\bQ)_l \phi'(\bK)_l^\top = \bL^{\otimes 2} (\bR^{\otimes 2})^\top = \left(\bL \bR^\top\right)^2$ where $\bL=\textsc{PolySketchWithNegativity}(\bQ_l,r,p/2)\in\mathbb{R}^{b\times r}$ and $\bR=\textsc{PolySketchWithNegativity}(\bK_l,r,p/2)\in\mathbb{R}^{b\times r}$ (recall Algorithm~\ref{alg:polysketch}).
Therefore $\lt(\phi'(\bQ)_l \phi'(\bK)_l^\top)$ only takes $O(b^2 r)$ time instead of $O(b^2r^2)$ time.
The total time to compute Polysketch attention is  $O(nb(r+h)+nr^2h).$

\subsection{Applying Exact Attention Locally}\label{sec:local}
We further observe that $\phi'(\bQ)_l \phi'(\bK)_l^\top$ is used to approximate $(\bQ_l\bK_l^\top)^p$.
We can actually compute $\bP_l$ as $\lt\left((\bQ_l\bK_l^\top)^p\right)\bX_l$.
This means that when token $i$ and $j$ are within the same local block, we can use their exact polynomial attention weight instead of using the approximation.
The time to compute $\lt\left((\bQ_l\bK_l^\top)^p\right)\bX_l$ is at most $O(b^2h)$.
In this case, the total time to compute our Polysketch attention is at most $O(nh(b+r^2))$.
When $b\leq r^2$, the running time is $O(nhr^2)$.
As observed by our empirical studies (see Figure~\ref{fig:context-length-vs-ppl}, Section~\ref{sec:exp} and other experiments in the appendix), using exact polynomial attention weights inside each local block further improves the model quality.

\section{Experiments}\label{sec:exp}

To evaluate the effectiveness of the polynomial attention and Polysketch attention mechanisms, we train language models of various sizes with different attention mechanisms and look at both pre-training metrics and the performances on downstream tasks.
Our implementations of all models are written in JAX. 
In our experiments, we use a Pallas implementation \citep{pallas_flash} of FlashAttention and a JAX implementation of Performer open-sourced by the authors \citep{choromanski2020rethinking}. All the experiments are conducted on 32 Google Cloud TPUs.

\begin{table*}
\small
\centering
\begin{tabular}{l c c c c c c c}\toprule
&  C4& \multicolumn{2}{c}{HellaSwag} & \multicolumn{2}{c}{PIQA} & \multicolumn{2}{c}{Physics}\\
\cmidrule(lr){2-2}\cmidrule(lr){3-4}\cmidrule(lr){5-6}\cmidrule(lr){7-8}
& Perplexity $\downarrow$    & 0-shot $\uparrow$ & 5-shot $\uparrow$ & 0-shot $\uparrow$ & 5-shot $\uparrow$ & 0-shot $\uparrow$ & 5-shot $\uparrow$\\\midrule
\multicolumn{6}{l}{\textbf{GPT-2 Small style, 100M-scale, 12 layers default, Context Length 8192, 125k training steps}}\\
Softmax                                      &  17.81 & {30.2} & 27.8 & {64.6} & 63.2 & 27.5 & 27.5\\
Polynomial (degree 4)                        &  18.18 & 28.6        & \underline{28.4} & 64.2 & \textbf{\underline{65.0}} & \underline{27.5} & \underline{31.0}\\
Polynomial (degree 8)                        &  \underline{17.77} & 29.8         & \underline{29.8} & 62.2 & \underline{64.0} & 23.1 & 26.2\\
Polysketch (learned, r = 64)                         &  18.79 & 29.6          & \underline{28.6} & 60.0 & 60.0 & 24.8 & \underline{30.5}\\
Polysketch (learned, 13 layers, r = 64)              &  18.47 & 28.4 & \underline{29.4} & 62.0 & 62.6 & \underline{27.5} & \underline{31.8}\\
Polysketch (learned + local, r = 64)                 &  17.98 & 29.8 & \textbf{\underline{30.6}} & 62.4 & \underline{63.6} & \textbf{\underline{30.1}} & {\underline{32.3}}\\
Polysketch (learned + local, 13 layers, r = 64)      &  \textbf{\underline{17.68}} & 29.0 & \underline{29.0} & 62.6 & \underline{64.2} & 20.5 & 27.0\\
Polysketch (learned, r = 32)                &   19.09   &   28.0   &    28.4   &    60.6    &   62.0    &   28.3    &   27.5      \\
Polysketch (learned, 13 layers, r = 32)     &   19.50   &   28.4    &   29.0   &    61.6    &   \underline{64.6}    &   27.9    &   \underline{33.1}    \\
Polysketch (learned + local, r = 32)        &   18.04   &   29.0    &   29.2    &   63.4    &   62.8    &   26.6    &   \underline{\textbf{35.8}}    \\
Polysketch (learned + local, 13 layers, r = 32) & \underline{17.72} &   \underline{\textbf{31.2}}    &   \underline{30.4}    &   \underline{\textbf{64.8}}    &   \underline{64.6}    &   \underline{27.9}    &   \underline{31.8}\\\midrule
\multicolumn{6}{l}{\textbf{GPT-2 Medium style, 300M-scale, 24 layers default, Context Length 8192, 125k training steps}}\\
Softmax                                      &  \textbf{13.98} & 35.8 & \textbf{36.6} & {67.0} & 67.2 & 30.5 & 25.7\\
Polynomial (degree 4)                        &  14.29 & \underline{35.8} & 36.0 & 65.8 & \underline{67.6} & 27.5 & \underline{28.8}\\
Polynomial (degree 8)                        &  14.14 & {\underline{37.0}} & \textbf{\underline{36.6}} & 65.4 & 65.6 & \textbf{\underline{33.1}} & \underline{27.5}\\
Polysketch (learned, r = 64)                         &  14.64 & 34.6 & 33.4 & 63.2 & 65.4 & \underline{31.0} & \underline{26.2}\\
Polysketch (learned, 26 layers, r = 64)              &  14.49 & 34.8 & 34.4 & 65.2 & 66.6 & 28.4 & {24.9}\\
Polysketch (learned + local, r = 64)                 &  14.16 & 35.0 & 35.0 & 65.8 & \textbf{\underline{68.6}} & 29.6 & \textbf{\underline{34.5}}\\
Polysketch (learned + local, 26 layers, r = 64)      &  \textbf{\underline{13.98}} & \underline{35.8} & 35.4 & 66.4 & \textbf{\underline{68.6}} & 27.0 & {\underline{33.6}}\\
Polysketch (learned, r = 32)                &   14.94   &   32.2    &   33.8    &   65.6    &   \underline{67.6}    &   \underline{32.7}    &   \underline{33.6}   \\
Polysketch (learned, 26 layers, r = 32)     &   14.73   &   32.8    &   35.2    &   65.0    &   65.2    &   28.3    &   \underline{31.8}   \\
Polysketch (learned + local, r = 32)        &   14.15   &   \underline{36.0}    &   35.8    &   65.2    &   \underline{67.6}    &   27.5    &   \underline{27.9}    \\
Polysketch (learned + local, 26 layers, r = 32) &  14.00    &   \underline{\textbf{37.2}}    &   35.4    &   \underline{\textbf{68.0}}    &   \underline{67.6}    &   23.1    &   \underline{29.6}\\\midrule
\multicolumn{6}{l}{\textbf{GPT-2 Large style, 700M-scale, 36 layers default, Context Length 2048, 125k training steps}}\\
Softmax                & 12.71                    &  40.2 & 40.2 & 68.8 & \textbf{71.4} & 34.4 & 24.4 \\
Polynomial (degree 4)&   12.82                    & 40.0 & \textbf{\underline{40.6}} & 67.8 & 66.6 & 31.8& \underline{31.4} \\
Polynomial (degree 8)&   12.85                    & 40.0 & 39.8 & 66.8 & 70.4 & \underline{34.4} & \underline{29.6}\\
Polysketch (learned, 39 layers, r = 64)& 12.83            & \textbf{\underline{41.0}} & 39.4 & 68.6 & 68.8 & 33.6 & \underline{36.6} \\
Polysketch (learned + local, 39 layers, r = 64) & \textbf{\underline{12.70}}   & \underline{40.6} & 40.0 & \textbf{\underline{69.0}} & 69.0 & \textbf{\underline{38.4}} & \textbf{\underline{37.1}} \\
Polysketch (learned, 39 layers, r = 32) &   12.98   &   39.4    &   \underline{40.4}    &   68.6  & 67.6    &   33.6     &  27.0   \\
Polysketch (learned + local, 39 layers, r = 32)&  12.74 &   39.6    &   \underline{\textbf{40.6}}    &   66.8    &   69.4    &   \underline{35.3}    &   \underline{31.8} \\\bottomrule
\end{tabular}
\caption{We compare the accuracies(\%, higher the better) of different models on three different Q/A tasks. HellaSwag and Physics tasks have 4 choices and PIQA task has 2 choices. We also report the perplexities (lower the better) on the validation split of C4 dataset. 
\textbf{Bolding} indicates the best model in the task, \underline{underlining} indicates beating softmax attention.}
    \label{tab:main-table}
\end{table*}

\paragraph{Synthetic tasks.}
Selective Copying and Induction Heads are two well-known downstream synthetic tasks for measuring content aware reasoning capabilities and the memorization abilities of the models (see \citet{gu2023mamba} for more discussions).
We conduct both experiments and see both polynomial and Polysketch have similar performance as softmax attention. We include more details in Appendix~\ref{apx:synthetic}.

\begin{figure*}
    \centering
\includegraphics[width=0.7\linewidth]{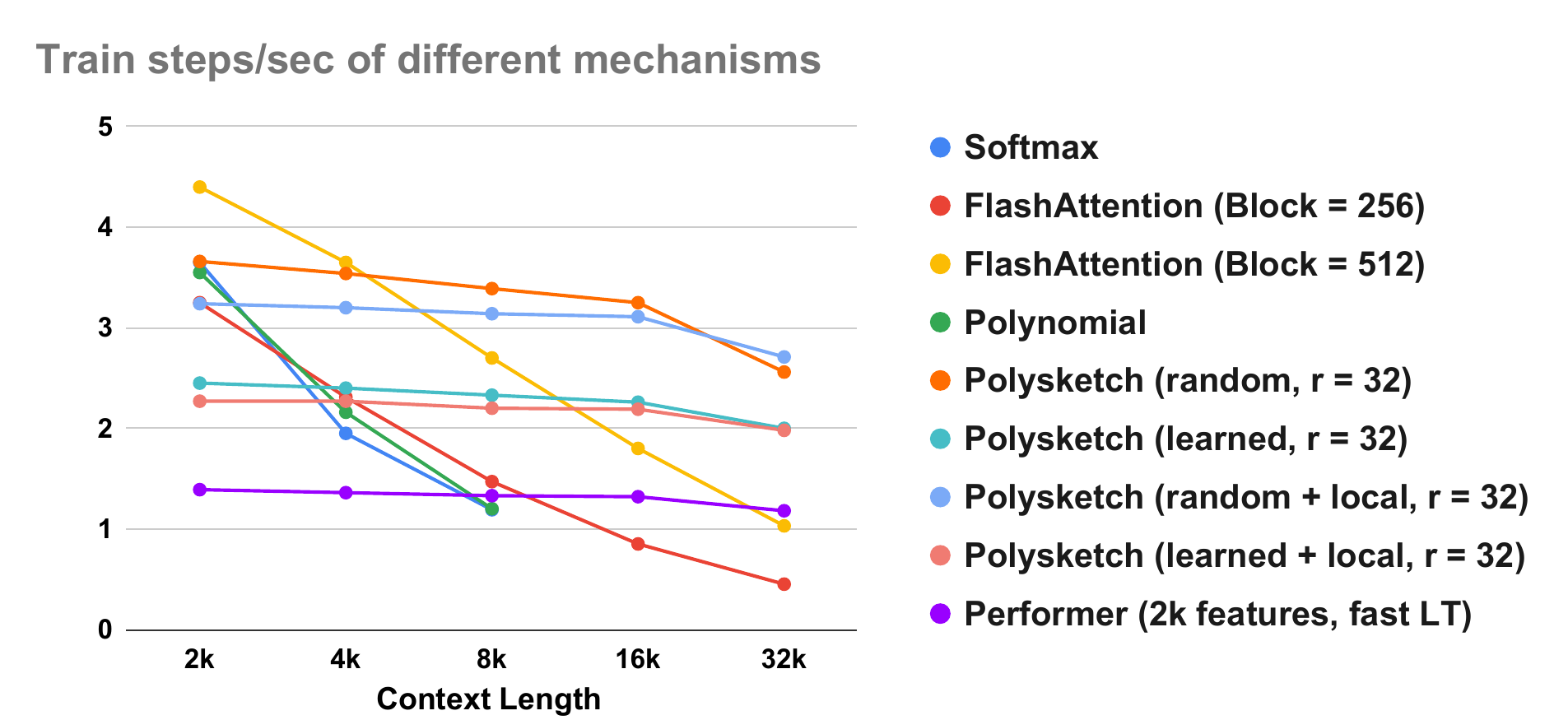}
\caption{\small
Training speed of models on PG-19 and Wiki-40B for different context lengths.
Softmax and polynomial attentions OOM'ed when context length >8k.}
    \label{fig:training_latency}
\end{figure*}

\paragraph{Models.} 
For real world datasets, we train decoder-only models (only contain causal masked attention layers) of three different scales, mirroring the GPT-2 family~\cite{radford2019language}: Small, Medium and Large. 
For small scale models, we train models using context lengths from 512 to 32k.
For medium scale models, we only train using context length 8k.
For large scale models, we only train using context length 2k.
The reason that we did not train longer context length for medium and large scale models is that non-kernel based attention mechanisms (softmax, polynomial) are too slow or go out of memory (OOM).
 The detailed descriptions of model sizes can be found in Appendix~\ref{sec:model_sizes}.
 We take the recipe of Transformer++ (see \cite{hua2022transformer,yang2023gated,gu2023mamba} as well).
 We refer readers to Appendix~\ref{sec:transformer_pp} for a detailed description of the Transformer++ used by us.
 If not specified otherwise, we use 10k warmup steps, 125k total training steps and a linear learning rate schedule.
Depending on the original model scale, we also train kernel based attention models (Polysketch and Performer) with $0-3$ additional layers, since these models are significantly faster than non-kernel based attention models so we can afford to train larger models compared to vanilla softmax. It only slightly increases model sizes.

\paragraph{Attention Mechanisms.} We train models with the following $4$ categories of attention mechanisms: (i) Softmax, (ii) Polynomial ($p=2 ,4, 8$), (iii) Polysketch (approximating polynomial attention of $p=4$) with variants enabling learned sketches (Section~\ref{sec:learnable_sketch}) or local exact polynomial attention (Section~\ref{sec:local}) or both, and (iv)  Performer equipped with our lower triangular multiplication approach (Section~\ref{sec:fast_lt}) for handling causal masks.
For both Performer and Polysketch, all attention heads share the same $\phi'$ within the same attention layer.

\paragraph{Hyper-parameters.} For FlashAttention, we try both block size $256$ and $512$\footnote{We find a speed-up increasing the default $128$ block size to $256$ and $512$ under our experimental setting. When increasing the block size to $1024$, FlashAttention ran out of memory under our empirical setup.}.
 For our fast lower triangular multiplication approach, we use $b=1024$ for both Polysketch and Performer.
We test both sketch sizes $r=32$ and $r=64$ for our Polysketch attention.
We use 2048 features for Performer\footnote{When using 4096 features, Performer ran out of memory in our experiments.}.

\paragraph{Pre-training metrics measurements (perplexities) over different context lengths.}
We train GPT-2 style small scale models equipped with different attention mechanisms on the Wiki-40B \citep{guo2020wiki} and PG-19 \citep{raecompressive2019} datasets with context length from 512 to 32k where each training batch contains 1M tokens.
For all kernel based attentions (Performer and Polysketch), we use 13 layers instead of 12.
More training details are mentioned in Appendix~\ref{sec:ppl_results}.
The perplexity results are shown in Figure~\ref{fig:context-length-vs-ppl} and training latencies are shown in Figure~\ref{fig:training_latency}.
Due to the space limit, we put all exact numbers in Appendix~\ref{sec:ppl_results}  including a detailed discussion.
We observe that in the setting of 32k context length, Polysketch (learned + local, r=32) achieves \textbf{2x} speed-up in comparison with FlashAttention of the fastest setup. 
As shown in Table~\ref{tab:ppl-pg19} and Table~\ref{tab:ppl-wiki} in Appendix~\ref{sec:ppl_results}, when we increase the sketch size $r$ from $32$ to $64$, we further reduce the perplexities.
In addition, as shown in Table~\ref{tab:training-latencies}, Polysketch (learned + local, r=64) still keeps $\sim10\%$ speed-up in comparison with FlashAttention of the fastest setup.
In addition, we observed that every kernel-based attention approach (Performer and Polysketch) with fast lower triangular matrix multiplication method almost keeps the same speed across different  context lengths given that we use same number of training tokens per step.
See more discussions in Appendix~\ref{sec:ppl_results}.

\paragraph{Downstream tasks of language models.} 
We train our models at different scales on the C4 dataset where each training batch contains 0.5M tokens.
The training details can be found in Appendix~\ref{sec:down_stream}.
In Table~\ref{tab:main-table}, we report the perplexity on the validation split of C4 dataset and 0-shot and 5-shot accuracies on a random sample of 500 examples of HellaSwag \cite{zellers2019hellaswag}, 500 examples of PIQA \cite{bisk2020piqa} and on the full Physics question answering dataset \cite{physics}. 
In addition to training models using 125k steps, we also train models with 30k steps to observe how the performance of attention mechanisms evolve with increasing number of total tokens trained on.
The results for 30k steps can be found in Appendix~\ref{sec:down_stream}.
As observed from Table~\ref{tab:main-table}, Polysketch attention has a comparable performance and sometimes outperforms softmax attention. 
In addition, the model quality improved with increasing model sizes. 
We leave more discussions in Appendix~\ref{sec:down_stream}.

\section{Acknowledgements}
\vspace{-0.1in}
We would like to thank Zeyuan Allen-Zhu, Krzysztof Choromanski, Insu Han, Yanping Huang, Weizhe Hua, Rajesh Jayaram, Zhipeng Jia, Amin Karbasi, Tamas Sarlos, and David P. Woodruff for helpful discussions and comments for improving the detailed implementations and presentations.
We would like to also thank many other contributors in the JAX/Flax community for suggestions on additional implementation details.

\bibliography{main}
\bibliographystyle{icml2024}

\newpage
\appendix
\onecolumn

\section{Conclusion and Future Work}

In this work, we empirically studied the performance of using high degree polynomial attention instead of softmax attention in training decoder-only models for language modeling tasks.
Our empirical study shows that the polynomial attention can achieve a similar model quality as the vanilla softmax attention when degree $p\geq 4$.
Then we developed an efficient approximate polynomial attention via polynomial sketching techniques which can be computed in linear time of  context length with provable approximation guarantees.
In addition, we presented a fast block based lower triangular matrix multiplication algorithm which can significantly boost the training time of any kernel based attention in the decoder based models.

There are several potential directions for future works. 
(1) Although we only empirically studied the performance of decoder-only models with polynomial attention for language modeling tasks, it is interesting to explore the potentials of encoder models with polynomial attention, and to understand whether it can be used in other fields such as vision.
(2) In this work, empirically we mainly focus on reducing the training latency. The benefits of linear transformers also transfer to inference as the KV cache sizes are independent of the context length. The exact inference improvements using linear transformers have to be explored more thoroughly.
(3) Polysketch attention is a kernel based method which can compute dense attention in linear time. It is interesting to see whether it can be combined with sparsification based efficient attention techniques such as HyperAttention proposed by~\cite{han2023hyperattention} recently.

\section{Discussion of the error bound of Theorem~\ref{thm:main_in_intro}}\label{sec:discussion_error_bound}

Let us look closely to the error bound stated in Theorem~\ref{thm:main_in_intro}.
Our error only has polynomial dependence in the $\ell_2$ norm bounds of $\{\bq_i\}$ and $\{\bk_j\}$.
In other word, to keep the same error, our sketching dimension $r$ only has polynomial dependence in the $\ell_2$ norm bounds of $\{\bq_i\}$ and $\{\bk_j\}$.
In contrast, to approximate the exponential kernel, the sketching dimension of Performer~\cite{choromanski2020rethinking} grows exponentially in the $\ell_2$ norm bounds of $\{\bq_i\}$ and $\{\bk_j\}$.

Suppose the $\ell_2$ norm of query and key vectors is bounded, i.e., $\max_{i}\max(\|\bq_i\|_2,\|\bk_i\|_2)\leq C$.
In the softmax attention, the ratio between two attention weights can be at most $\exp(\langle \bq_i, \bk_j \rangle) / \exp(\langle \bq_i, \bk_{j'}\rangle)\leq \exp(C^2)/\exp(-C^2)=\exp(2C^2)$ which is bounded.
In contrast, $\langle \bq_i, \bk_j \rangle^p / \langle \bq_i, \bk_{j'}\rangle^p$ can be arbitrarily large since $\langle \bq_i, \bk_{j'} \rangle$ can be close to $0$.
Therefore, in this bounded norm situation, polynomial attention is more capable for the operation of "taking the max".

Another difference between the approximation provided by Performer and ours is that Performer provides entry-wise approximation guarantee while we provide an approximation guarantee in average. 
Consider an example that all query and key vectors have $\ell_2$ norm at most $1$.
By  Markov inequality, we know $90\%$ of pairs $(i,j)\in [n]\times [n]$ satisfies $\left|\langle \phi'(\bq_i), \phi'(\bk_j) \rangle - \langle \bq_i, \bk_j\rangle^2\right|\leq \varepsilon'$, where $\varepsilon'=10\varepsilon$.
As long as both $\langle \bq_i, \bk_j\rangle^p, \langle \bq_i, \bk_{j'}\rangle^p \in (\varepsilon'/\varepsilon'' , 1]$ for some arbitrary $\varepsilon''>\varepsilon'$, $\langle \phi'(\bq_i),\phi'(\bk_j)\rangle/\langle \phi'(\bq_i),\phi'(\bk_{j'})\rangle$ is a $(1\pm O(\varepsilon''))$-approximation to $\langle \bq_i, \bk_j\rangle^p/ \langle \bq_i, \bk_{j'}\rangle^p$.

\section{Proof of Theorem~\ref{thm:tensoring-preserves-amm}}\label{sec:proof_of_tensoring}

We first note the following fact: If $\bS$ has $(\varepsilon, \delta, t)$-JL moment property, then for any two arbitrary vectors $\bx$ and $\by$, we have that $\|\la \T\bS\bx, \T\bS\by\ra - \la \bx, \by\ra\|_{L^t} \le \varepsilon\delta^{1/t}\opnorm{\bx}\opnorm{\by}$. For a proof see Lemma~9 from \cite{ahle2020oblivious}. 
\begin{proof}[Proof of Theorem~\ref{thm:tensoring-preserves-amm}]
Let $\bc_i$ denote the $i$-th row of $\bC$ and $\bd_j$ denote the $j$-th row of $\bD$. Then the $(i,j)$-th entry of the matrix $\bC^{\otimes 2}\T{(\bD^{\otimes 2})}$ is equal to $\la \bc_i, \bd_j\ra^2$. Similarly, the $(i,j)$-th coordinate of the matrix $(\bC\bS)^{\otimes 2}\T{((\bD\bS)^{\otimes 2})}$ is equal to $\la \T{\bS}\bc_i, \T{\bS}\bd_j\ra^2$ and therefore
\begin{align*}
    \frnorm{(\bC\bS)^{\otimes 2}\T{((\bD\bS)^{\otimes 2})} - \bC^{\otimes 2}\T{(\bD^{\otimes 2})}}^2 = \sum_{i,j}(\la \T{\bS}\bc_i, \T{\bS}\bd_j\ra^2 - \la \bc_i, \bd_j\ra^2)^2. 
\end{align*}
Recall that given an integer $t \ge 1$, for a random variable $\bX$, we define $\|\bX\|_{L^t}$ as $\E[|\bX|^t]^{1/t}$. Also note that $\|\bX\|_{L^t}$ is a norm over the random variables and in-particular satisfies the triangle inequality. Now,
\begin{align*}
    \|\frnorm{(\bC\bS)^{\otimes 2}\T{((\bD\bS)^{\otimes 2})} - \bC^{\otimes 2}\T{(\bD^{\otimes 2})}}\|_{L^t} &= \|\frnorm{(\bC\bS)^{\otimes 2}\T{((\bD\bS)^{\otimes 2})} - \bC^{\otimes 2}\T{(\bD^{\otimes 2})}}^2\|_{L^{t/2}}^{1/2}\\
    &=\|\sum_{i,j} (\la \T{\bS}\bc_i, \T{\bS}\bd_j\ra^2 - \la \bc_i, \bd_j\ra^2)^2\|_{L^{t/2}}^{1/2}\\
    &\le ({\sum_{i,j}\|(\la \T{\bS}\bc_i, \T{\bS}\bd_j\ra^2 - \la \bc_i, \bd_j\ra^2)^2\|_{L^{t/2}}})^{1/2}
\end{align*}
where we used the triangle inequality of $\|\cdot\|_{L^t}$ in the last inequality. Now consider a single term $\|(\la \T{\bS}\bc_i, \T{\bS}\bd_j\ra^2 - \la \bc_i, \bd_j\ra^2)^2\|_{L^{t/2}}$. First, we have
\begin{align*}
    &(\la\T{\bS}\bc_i, \T{\bS}\bd_j\ra^2 - \la \bc_i, \bd_j\ra^2)^2\\
    &= (\la\T{\bS}\bc_i, \T{\bS}\bd_j\ra + \la \bc_i,\bd_j\ra)^2(\la \T{\bS}\bc_i, \T{\bS}\bd_j\ra - \la \bc_i, \bd_j\ra)^2\\
    &= (\la\T{\bS}\bc_i, \T{\bS}\bd_j\ra - \la \bc_i,\bd_j\ra + 2 \la \bc_i, \bd_j\ra)^2(\la \T{\bS}\bc_i, \T{\bS}\bd_j\ra - \la \bc_i, \bd_j\ra)^2\\
    &\le (1+C)(\la \T{\bS}\bc_i, \T{\bS}\bd_j\ra - \la \bc_i, \bd_j\ra)^4 + 4(1+1/C)\la \bc_i, \bd_j\ra^2(\la \T{\bS}\bc_i, \T{\bS}\bd_j\ra - \la \bc_i, \bd_j\ra)^2
\end{align*}
with probability $1$ for any $C \ge 1$. Since both LHS and RHS are \emph{non-negative} random variables, we obtain that 
\begin{align*}
    &\|(\la \T{\bS}\bc_i, \T{\bS}\bd_j\ra^2 - \la \bc_i, \bd_j\ra^2)^2\|_{L^{t/2}}\\
    &\le (1+C)\|(\la \T{\bS}\bc_i, \T{\bS}\bd_j\ra - \la \bc_i, \bd_j\ra)^4\|_{L^{t/2}} + 4(1+1/C)\la \bc_i, \bd_j\ra^2\|(\la\T{\bS}\bc_i, \T{\bS}\bd_j\ra - \la \bc_i, \bd_j\ra)^2\|_{L^{t/2}}.
\end{align*}
Now,
\begin{align*}
    \|(\la \T{\bS}\bc_i, \T{\bS}\bd_j\ra -\la \bc_i, \bd_j\ra)^4\|_{L^{t/2}} &= \|\la \T{\bS}\bc_i, \T{\bS}\bd_j\ra - \la \bc_i, \bd_j\ra\|_{L^{2t}}^4\\
    &\le \varepsilon^4\delta^{2/t}\opnorm{\bc_i}^4\opnorm{\bd_j}^4
\end{align*}
assuming that $S$ has $(\varepsilon, \delta, 2t)$-JL moment property. We also have
\begin{align*}
    \|(\la \T{\bS}\bc_i, \T{\bS}\bd_j\ra -\la \bc_i, \bd_j\ra)^2\|_{L^{t/2}} &= \|\la \T{\bS}\bc_i, \T{\bS}\bd_j\ra - \la \bc_i, \bd_j\ra\|_{L^t}^2\\
    &\le \varepsilon^2\delta^{2/t}\opnorm{\bc_i}^2\opnorm{\bd_j}^2
\end{align*}
assuming that $\bS$ has $(\varepsilon, \delta, t)$-JL moment property. Overall, we get
\begin{align*}
    &\|(\la \T{\bS}\bc_i, \T{\bS}\bd_j\ra^2 - \la \bc_i, \bd_j\ra^2)^2\|_{L^{t/2}}\\
    &\le (1+C)\varepsilon^4\delta^{2/t}\opnorm{\bc_i}^4\opnorm{\bd_j}^4 + 4(1+1/C)\la \bc_i, \bd_j\ra^2\varepsilon^2\delta^{2/t}\opnorm{\bc_i}^2\opnorm{\bd_j}^2.
\end{align*}
Picking $C = 1/\varepsilon$ and assuming $\varepsilon \le 1/5$, we get that
\begin{align*}
    \|(\la \T{\bS}\bc_i\ra^2 - \la \bc_i, \bd_j\ra^2)^2\|_{L^{t/2}} \le 5\varepsilon^2\delta^{2/t}\opnorm{\bc_i}^4\opnorm{\bd_j}^4.
\end{align*}
Thus, we have
\begin{align*}
    \|\frnorm{(\bC\bS)^{\otimes 2}\T{((\bD\bS)^{\otimes 2})} - \bC^{\otimes 2}\T{(\bD^{\otimes 2})}}\|_{L^t} &\le \sqrt{5}\varepsilon\delta^{1/t}\sqrt{\sum_{i, j}\opnorm{\bc_i}^4\opnorm{\bd_j}^4}\\
    &\le \sqrt{5}\varepsilon\delta^{1/t}\frnorm{\bC^{\otimes 2}}\frnorm{\bD^{\otimes 2}}.
\end{align*}
By using Markov's inequality, we obtain that with probability $\ge 1 - \delta$,
\begin{align*}
    \frnorm{(\bC\bS)^{\otimes 2}\T{((\bD\bS)^{\otimes 2})} - \bC^{\otimes 2}\T{(\bD^{\otimes 2})}} \le \sqrt{5}\varepsilon\frnorm{\bC^{\otimes 2}}\frnorm{\bD^{\otimes 2}}.&\qedhere
\end{align*}
\end{proof}

\section{Replacing Random Projections with Learnable Transformations}\label{sec:network_structure_for_learnable_sketch}
Our learnable polynomial sketch algorithm is stated in Algorithm~\ref{alg:learnable_polysketch}.
It has a similar structure as our randomized polynomial sketch stated in Algorithm~\ref{alg:polysketch}.
The only differences are (1) we replace random projections $\bM_1\bG_1$ and $\bM_2\bG_2$ with $f_1(\bM_1)$ and $f_2(\bM_2)$ respectively.
(2) We apply a $\tanh(\cdot)$ trick to each entry of $\sqrt{1/r}\cdot [f_1(\bM_1)*f_2(\bM_2)]$ to make the output within a reasonable range and thus make the optimization process stable and converge.

\begin{algorithm}[ht]
\small
\caption{Learnable Polynomial Sketches}
\label{alg:learnable_polysketch}
\begin{algorithmic}
\FUNCTION{\textsc{LearnablePolysketchWithNegativity}$(\bA\in\mathbb{R}^{k\times m}, r, p)$}
\STATE // Analog of \textsc{PolysketchWithNegativity}.
\STATE If $p=1$, return $\bA$
\STATE $\bM_1$ = \textsc{LearnablePolysketchWithNegativity$(\bA, r, p/2)$}
\STATE $\bM_2$ = \textsc{LearnablePolysketchWithNegativity$(\bA, r, p/2)$}
\STATE Return $ \sqrt{r}\cdot\tanh\left(\sqrt{1/r}\cdot [f_1(\bM_1) * f_2(\bM_2)]\right)\in\mathbb{R}^{k\times r}$
\ENDFUNCTION
\FUNCTION{\textsc{LearnablePolysketchNonNegative}$(\bA\in\mathbb{R}^{k\times m}, r, p)$}
\STATE // Analog of \textsc{PolysketchNonNegative}. 
\STATE $\bM$ = \textsc{LearnablePolysketchWithNegativity$(\bA, r, p/2)$}
\STATE Return $\bM^{\otimes 2}\in\mathbb{R}^{k\times r^2}$.
\ENDFUNCTION
\end{algorithmic}
\end{algorithm}
Each $f_1(\cdot),f_2(\cdot)$ has the same dense network structure but different learnable parameters.
The network has output dimension $r$ and $3$ hidden layers with size $[8 r, r, 8 r]$.
We apply an activation function $\mathrm{gelu}(\cdot)$ after the first and the third hidden layer.
We apply an layer normalization before the input and the second hidden layer.
Therefore, each network only has roughly $8hr+24r^2$ or $32r^2$ number of parameters.
The entire learnable polynomial sketch only contains $p-2$ learnable networks.

Since all attention heads share the same learnable polynomial sketch within an attention layer, the number of increased learnable parameters is negeligible in comparison with the entire model.

Note that we did not take much time to optimize the network structure. 
It is likely that better network structures exist.
We leave the question of finding a better network structure as a future work.
\section{Perplexity Results on PG-19 and Wiki-40B}\label{sec:ppl_results}
We train GPT-2 small scale models on PG-19 and Wiki-40B datasets at various context lengths. We use the same training recipe that we described in Section~\ref{sec:exp} and train the models for 125k steps with a batch size of 1M tokens. For each of PG-19 and Wiki-40B datasets, we obtain a SentencePiece vocabulary of size 32k and train the models using the respective tokenizer.

We measure test perplexities of each of the models in Tables~\ref{tab:ppl-pg19} and \ref{tab:ppl-wiki}. We can see that the Polysketch attention model (learned + local) equipped with one additional layer beats the softmax attention models at all context lengths. 
We also note that Polysketch attention models, even without local attention, also achieve perplexities close to that of softmax models at all context lengths. These experiments show that our attention mechanism can scale to large context lengths without significant model quality loss.

The training latencies are shown in Table~\ref{tab:training-latencies}.
As we observed that all kernel based approaches equipped with our fast lower triangular multiplication approach are significantly faster than non-kernel based methods.
Notably, Polysketch (learned + local, r=64) achieves 1.1x speed up in comparison with the FlashAttention (block size 512) on 32k context length, and Polysketch (learned + local, r=32) achieves \textbf{2x} speed up in comparison with the FlashAttention (block size 512) on 32k context length.
Both Polysketch (learned + local, r=32) and Polysketch (learned + local, r=64) have lower perplexity than the softmax attention.
Polysketch (learned + local, r=64) has the lowest test perplexity.

\begin{table*}
\centering
\begin{tabular}{l c c c c c c c}\toprule
& 512 & 1k & 2k & 4k & 8k & 16k & 32k\\\midrule
\textbf{Non-kernel based methods, 12 layers}\\
  Softmax (using FlashAttention)   & {13.57} & {12.75}  & {12.23}  & {11.88}  & {11.65}  & {11.57}  & {11.55}   \\
  Polynomial (deg=2)     & {13.84} & {13.10} & {12.75} & {12.61} & {12.72} & OOM & OOM\\
  Polynomial (deg=4)     & {13.58} & {12.76} & {12.26} & {12.00} & {11.85} & OOM & OOM\\
  Polynomial (deg=8)     & {13.56} & {12.71} & {12.16} & {11.86} & {11.64} & OOM & OOM\\\midrule
  \textbf{Kernel based methods, 13 layers}\\
  Polysketch (random, r = 32) &   14.31   &   13.74   &   13.40   &   13.26   &   13.41   &   13.79   &   14.75\\
  Polysketch (random, r = 64) & {14.00} & {13.35} & {13.03} & {12.84} & {12.92} & {13.18} & {13.66}\\
  Polysketch (learned, r = 32)  &   13.49   &   12.74   &   12.34   &   12.16   &   12.21   &   12.40   &   12.79\\
  Polysketch (learned, r = 64) & \tbold{13.33} & {12.60} & {12.10} & {11.90} & {11.86} & {11.94} & {12.19}\\
  Polysketch (random + local, r = 32)   &   13.37   &   \textbf{12.58}   &   12.23   &   12.01   &   11.95   &   11.90   &   12.16\\
  Polysketch (random + local, r = 64) & {13.37} & \tbold{12.58} & {12.24} & {11.98} & {11.93} & {11.96} & {11.91}\\
  Polysketch (learned + local, r = 32) & {13.37} & \tbold{12.58} & {12.09} & {11.75} & {11.55} & {11.46} & {11.47}\\
  Polysketch (learned + local, r = 64) & {13.37} & \tbold{12.58} & \tbold{12.03} & \tbold{11.69} & \tbold{11.44} & \tbold{11.38} & \tbold{11.34}\\
  Performer (2048 features) & {14.30} & {13.68} & {13.56} & {13.50} & {13.49} & {13.73} & {14.17}\\ \bottomrule
\end{tabular}
\caption{Perplexities on the test split of PG19 when the models are trained on PG19 dataset}
\label{tab:ppl-pg19}
\end{table*}

\begin{table*}
\centering
\begin{tabular}{l c c c c c c c}\toprule
& 512 & 1k & 2k & 4k & 8k & 16k & 32k\\\midrule
\textbf{Non-kernel based methods, 12 layers}\\
  Softmax (using FlashAttention)   &   15.82 & 15.04 & 14.61 & 14.40 & 14.35 & 14.34 & 14.35 \\
  Polynomial (p=2)     & 16.24 & 15.58 & 15.38 & 15.41 & 15.60 & OOM & OOM \\
  Polynomial (p=4)     & 15.85 & 15.11 & 14.75 & 14.59 & 14.59 & OOM & OOM\\
  Polynomial (p=8)     & 15.81 & 15.00 & 14.56 & 14.36 & 14.32 & OOM & OOM\\ \midrule
  \textbf{Kernel based methods, 13 layers}\\
  Polysketch (random, r = 32) & 16.84 & 16.35 & 16.20 & 16.28 & 16.52 & 17.05 & 17.84\\
  Polysketch (random, r = 64) & 16.32 & 15.73 & 15.72 & 15.88 & 16.01 & 16.44 & 17.45\\
  Polysketch (learned, r = 32) & 15.84 & 15.20 & 14.95 & 14.91 & 15.06 & 15.52 & 15.93\\
  Polysketch (learned, r = 64) & 15.65 & 14.96 & 14.62 & 14.58 & 14.70 & 14.95 & 15.35\\
  Polysketch (random + local, r = 32) & \tbold{15.63} & \tbold{14.86} & 14.60 & 14.50 & 14.52 & 14.65 & 14.68\\
  Polysketch (random + local, r = 64) & \tbold{15.63} & \tbold{14.86} & 14.58 & 14.52 & 14.43 & 14.62 & 14.54 \\
  Polysketch (learned + local, r = 32) & \tbold{15.63} & \tbold{14.86} & 14.46 & 14.28 & 14.24 & \tbold{14.23} & 14.32\\
  Polysketch (learned + local, r = 64) & \tbold{15.63} & \tbold{14.86} & \tbold{14.43} & \tbold{14.26} & \tbold{14.18} & {14.24} & \tbold{14.29}\\
  Performer (2048 features) & 16.75 & 16.18 & 16.14 & 16.37 & 16.64 & 17.16 & 18.40\\\bottomrule
\end{tabular}
\caption{Perplexities on the test split of Wiki-40B when the models are trained on Wiki-40B dataset}
\label{tab:ppl-wiki}
\end{table*}

\begin{table*}
    \centering
    \begin{tabular}{l c c c c c c c c c}\toprule
                            &  512     & 1k   &     2k  &    4k     &   8k   &   16k     &   32k \\\midrule
        Softmax             & \textbf{6.00}       & {4.95}   & {3.65} & 1.95 & 1.19 & OOM & OOM\\\midrule
        \begin{tabular}{@{}l@{}}FlashAttention\\ (Block size 256 x 256)\end{tabular}                                                       & 4.78      &  4.09  & 3.25 & {2.31} & 1.47 & 0.85 & 0.45 \\\midrule
        \begin{tabular}{@{}l@{}}FlashAttention\\ (Block size 512 x 512)\end{tabular}                                                       & 5.46      &  \textbf{5.0}  & \textbf{4.4} & \textbf{3.65} & \textbf{2.7} & 1.8 & 1.03 \\\midrule
        \begin{tabular}{@{}l@{}}Polynomial \\ (p=2, 4, 8)\end{tabular}                                              &  5.74     &  4.74  & 3.55 & 2.16 & 1.20 & OOM & OOM\\\midrule
        \begin{tabular}{@{}l@{}}Polysketch \\ (random, 13 layers, r = 32)\end{tabular}                    &    5.25    &   4.31    &   3.66    &   3.54    &   3.39    &   3.25    &   2.56

 \\\midrule
        \begin{tabular}{@{}l@{}}Polysketch \\ (random, 13 layers, r = 64)\end{tabular}                    &    5.06   &    4.20 & 2.50 & 2.23 & 2.06 & 1.95 & 1.55 \\\midrule
        \begin{tabular}{@{}l@{}}Polysketch \\ (learned, 13 layers, r = 32)\end{tabular}                    &   3.16
&   2.82    &   2.45    &   2.40    &   2.33    &   2.26    &   2.00\\\midrule
        \begin{tabular}{@{}l@{}}Polysketch \\ (learned, 13 layers, r = 64)\end{tabular}                    &   2.17    & 1.97   & 1.37  & 1.35 & 1.33 & 1.29 & 1.13 \\\midrule
        \begin{tabular}{@{}l@{}}Polysketch \\ (random + local, 13 layers, r = 32)\end{tabular}                  & 5.35  &   4.40    &   3.24    &   3.2 &   3.14    &   3.11    &   2.71\\\midrule
        \begin{tabular}{@{}l@{}}Polysketch \\ (random + local, 13 layers, r = 64)\end{tabular}                  & 5.35  &    4.40   &    2.09 & 2.00 & 1.91 & 1.82  & 1.60  \\\midrule
        \begin{tabular}{@{}l@{}}Polysketch \\ (learned + local, 13 layers, r = 32)\end{tabular}                    &  5.35     & 4.40   & 2.27 & 2.27 & {2.20} & \textbf{2.19} & \textbf{1.98}\\\midrule
        \begin{tabular}{@{}l@{}}Polysketch \\ (learned + local, 13 layers, r = 64)\end{tabular}                    &  5.35     & 4.40    & 1.31 & 1.31 & 1.30 & 1.26 & 1.12\\\midrule
        \begin{tabular}{@{}l@{}}Performer \\ (2k features\\+ Fast lower triangular multiplications)\end{tabular}                                         & 2.21    &    1.58       & 1.39 & 1.36 & 1.33 & 1.32 & 1.18\\\midrule
        \begin{tabular}{@{}l@{}}Performer \\ (256 features (default)\\without Fast lower triangular multiplications)\end{tabular}                                             &  0.44     & 0.40   & 0.36  &  0.29 & 0.21 & 0.14  & 0.08 \\  
        \bottomrule
    \end{tabular}
    \caption{\textbf{Training steps/sec} of different attention mechanisms at various context lengths (\textbf{higher is faster}). For context lengths 512 and 1k, we compute the full attention matrix in Polysketch and Performer attention \textbf{without} using the linearization technique. These models are all GPT-2-like small scale models. Each batch contains 1M tokens in total.}
\label{tab:training-latencies}
\end{table*}

\subsection{Training Latency Comparison}

The main advantage of linear transformers is that their training latency remains the same across different context lengths given that we use the same ``batch size'' (tokens per training step) for all the context lengths. To show that it is the case, we report the training latencies (in terms of steps/sec) of our models and other attention mechanisms in Table~\ref{tab:training-latencies}. Using the same batch size across different context lengths, we note that the steps/sec of linear transformers such as Polysketch and Performer remain almost constant whereas the steps/sec of quadratic-time transformers decreases with increasing context lengths. The results show that, depending on the model structure, models using our Polysketch attention mechanism are \textbf{significantly faster to train} than models using a quadratic attention mechanism such as softmax (implemented via FlashAttention) at \textbf{long context lengths}.

\section{Experiments with Synthetic Tasks}\label{apx:synthetic}
For both synthetic experiments, We train a small 2-layer transformer.
Each attention layer contains 8 attention heads where each has head size 16.
For Polysketch attention, we choose r=32 and the block size $b=1024$ for fast lower triangular multiplication (Section~\ref{sec:fast_lt}).

\subsection{Selective Copying}
Recently, \citet{gu2023mamba} have used selective copying task as a yard stick for measuring content aware reasoning capabilities and the memorization abilities of the models. In this task, the model is required to memorize colored blocks that appear in the context and the model needs to output the colored blocks in the same order at the end. See \cite{gu2023mamba} for a more detailed description of this task. 

We generate 64k random examples used for training. 
Each batch has 64 examples.
We train for 400k steps in total without otherwise specified.
Using a similar training recipe as in their paper, we train small two layer models using different attention mechanisms to solve these tasks at context lengths 4k, 16k, and 32k. We report our results in Table~\ref{tab:selective-copying}. We see that polynomial and Polysketch attention manage to learn to solve the selective copying task though the accuracy of Polysketch is a bit worse at 16k context length with the same training recipe as other models. We found that with a different learning rate schedule, Polysketch attention also manages to solve the selective copying task at a context length 16k with an accuracy of 99.44\% thus showing that there may not be any loss in matching the reasoning capabilities of the softmax attention mechanism. This suggests that Polysketch attention may require different learning rate schedules to obtain the optimal performance as compared to softmax transformers.

We also find that at a context length of 32k, Polysketch attention learns to solve 95.29\% of test examples after 800k steps of training. In all our experiments, we observe sudden spike in the accuracies of the models indicating the point where the model learns to solve the task, see e.g., Figure~\ref{fig:learn_curve}.

\begin{table}[ht]
    \centering
    \begin{tabular}{l c c c}\toprule
                 & 4k & 16k  & 32k  \\ \midrule
         Softmax &  99.73 \%& 98.17\% & 0\%\\ 
         Polynomial (degree=4)& 99.90\% & 97.97\% & 0\%\\
         Polynomial (degree=8)& 99.90\% & 97.65\% & 0\%\\
         Polysketch (learned + local) & 99.16\% & 92.75\% & 0\%\\ 
         \begin{tabular}{@{}l@{}}Polysketch (learned + local) \\ (different learning rate schedule)\end{tabular} & - & 99.44\% & 87.16\%\\ \bottomrule
    \end{tabular}
    \caption{\% of 4096 examples on which the models succeeded to perfectly output the colored blocks in the context in the same order.}
    \label{tab:selective-copying}
\end{table}

\begin{figure*}[t]
\centering
\includegraphics[width=0.8\linewidth]{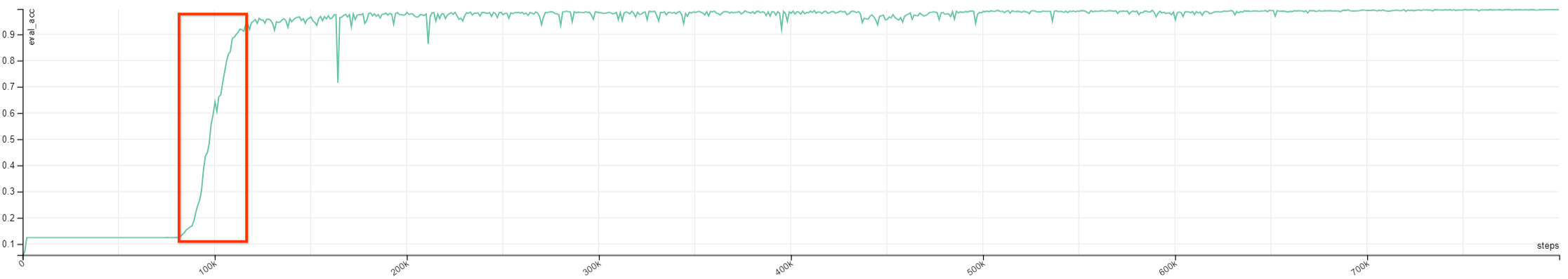}
\caption{The test accuracy during training Polysketch (learned + local) on the selective copying task of 32k context length.
The x-axis is the number of steps trained and the y-axis is the test accuracy.
We observe that the model suddenly learns the pattern at some point during the training.}\label{fig:learn_curve}
\end{figure*}
\subsection{Induction heads}
\citet{olsson2022context} have proposed induction heads task as a way to identify and explain the in-context learning capabilities of language models. This task requires the model to output the token that appears immediately after a special token that appears in the context exactly once at an arbitrary position. 

We generate 64k random examples used for training. 
Each batch contains 64 examples.
Each position is a random token from a vocabulary of size 16.
We replace a random position except the last 3 tokens with a special token.
We replace the second to the last token with a special token.
We replace the last token with the token appeared directly after the first special token.

The total number of training steps is 400k.
We consider context lengths 128 and 256. We observe that all the models (softmax, degree 4 polynomial, degree 8 polynomial and Polysketch with sketch size 16 and 32) are able to solve (accuracy > 99.95\%) the task at a context length of 128 and all of them fail to solve (accuracy around 1/16, i.e., random guessing) the task at a context length 256 under the same optimization configuration.

\begin{table*}[t!]
\centering
\begin{tabular}{l c c c c c c c}\toprule
&  C4& \multicolumn{2}{c}{HellaSwag} & \multicolumn{2}{c}{PIQA} & \multicolumn{2}{c}{Physics}\\
\cmidrule(lr){2-2}\cmidrule(lr){3-4}\cmidrule(lr){5-6}\cmidrule(lr){7-8}
& Perplexity $\downarrow$    & 0-shot $\uparrow$ & 5-shot $\uparrow$ & 0-shot $\uparrow$ & 5-shot $\uparrow$ & 0-shot $\uparrow$ & 5-shot $\uparrow$\\\midrule
\multicolumn{6}{l}{\textbf{GPT-2 Small style, 100M-scale, 12 layers default, Context Length 8192, 30k training steps}}\\
Softmax                          &   20.11 &   28.0    &   \textbf{28.8}    &   61.8    &   \textbf{62.8}    &   20.5    &   30.1           \\
Polynomial (degree 4)            &   20.66  &   \underline{28.4}    &   28.6    &   60.4    &   61.0    &   \underline{24.4}    &   \underline{30.1}                \\
Polynomial (degree 8)            &   20.05  &   27.6    &   27.4    &   59.8    &   60.0    &   \underline{20.5}    &   23.1\\
Polysketch (learned, r = 64)             &   21.16  &   27.8    &   28.0    &   60.4    &   61.2    &   \underline{29.6}    &   \underline{33.6}\\
Polysketch (learned, 13 layers, r = 64)  &   20.93  &   \underline{28.6}    &   28.2    &   \textbf{62.0}    &   61.8    &  \underline{30.1}     & \underline{\textbf{35.8}} \\
Polysketch (learned + local, r = 64)     &   20.30  &   \underline{28.4}    &   27.8    &   61.6    & \underline{\textbf{62.8}}    &  \underline{27.9}    &   \underline{34.9}           \\
Polysketch (learned + local, 13 layers, r = 64)&   \underline{\tbold{19.91}}   &  \underline{28.4}    &   27.6    &   61.2    &   61.0    &   \underline{31.0}    &   \underline{30.5}\\
Polysketch (learned, r = 32)                &   22.31   &   27.0    &   26.6    &   59.8    &   61.8    & \underline{\textbf{35.3}}    &   \underline{34.4}                \\
Polysketch (learned, 13 layers, r = 32)     &   22.15   &   \underline{\tbold{28.8}}    &   28.2    &   59.8    &   60.0    &   \underline{32.3}    &   \underline{31.0}\\
Polysketch (learned + local, r = 32)        &   20.28   &   \underline{28.4}    &   28.4    &   59.6    &   61.4    &   \underline{29.6}    &   \underline{31.4}\\
Polysketch (learned + local, 13 layers, r = 32) & \underline{19.94} &   \underline{28.6}    &   28.2    &   60.4    &   61.2    &   \underline{29.2}    &   \underline{30.5}\\\midrule
\multicolumn{6}{l}{\textbf{GPT-2 Medium style, 300M-scale, 24 layers default, Context Length 8192, 30k training steps}}\\
Softmax                          &  15.97   &   \textbf{32.0}    &   31.6    &   61.8    &   63.4    &   25.3    &   29.2          \\
Polynomial (degree 4)            &  16.46   &   29.2    &   29.8    &   \underline{63.2}    &   \underline{64.4}    &   \underline{30.5}    &   \underline{33.6}\\
Polynomial (degree 8)            &  16.12   &   31.4    &   31.4    &   \underline{64.2}    &   \underline{64.0}    &   \underline{27.0}    &   28.3\\
Polysketch (learned)             &  17.18   &   29.2    &   30.0    &   \underline{62.8}    &   \underline{64.0}    &   \underline{27.0}    &   \underline{31.0}\\
Polysketch (learned, 26 layers, r = 64)  &  17.06   &   29.8    &   31.0    &   \underline{64.8}    &   \underline{64.6}    &   \underline{28.3}    &   \underline{30.1}\\
Polysketch (learned + local, r = 64)     & 16.14   &   31.6    & \underline{32.6}    &  \underline{64.6}    &   \underline{64.4}    &   \underline{27.9}    &   \underline{31.4}\\
Polysketch (learned + local, 26 layers, r = 64)&  \underline{\textbf{15.95}}   &   31.8    &   31.4    &   \underline{63.8}    &  \underline{\textbf{66.0}}     &   20.0    &   27.9\\
Polysketch (learned, r = 32)                &   17.81   &   30.4    &   30.4    &   60.2    &   61.6    &  \underline{\textbf{34.4 }}    &   \underline{\textbf{34.0}}\\   
Polysketch (learned, 26 layers, r = 32)     &   17.47   &   29.2    &   30.2    &   61.6    &   62.0    &   \underline{28.3}    &   \underline{31.4}\\
Polysketch (learned + local, r = 32)        &   16.20   &   31.6    &  \underline{\textbf{33.0}}     &     \underline{\textbf{65.8}}      &  \underline{65.8} &   \underline{31.8}    &   28.8\\
Polysketch (learned + local, 26 layers, r = 32) &   16.02   &  31.8    &   \underline{31.8}    &   \underline{64.4}    &   \underline{65.0}    &   \underline{27.5}    &   \underline{31.4}\\\bottomrule
\end{tabular}
\caption{We compare the accuracies(\%, higher the better) of different models (all trained for 30k training steps) on three different Q/A tasks. HellaSwag and Physics tasks have 4 choices and PIQA task has 2 choices. We also report the perplexities (lower the better) on the validation split of C4 dataset. 
\textbf{Bolding} indicates the best model in the task, \underline{underlining} indicates beating softmax attention.}
    \label{tab:main-table-2}
\end{table*}
\section{Experiments on Downstream Tasks}\label{sec:down_stream}
\paragraph{Additional training details.} 
We train all models from scratch on the C4 dataset using a SentencePiece tokenizer trained on C4 with a vocabulary size of 32,000. We use a batch size of 0.5M tokens per training step and a peak learning rate of 3e-4. We use a linear learning rate schedule to warmup the learning rate for the first 10\% of iterations and then again use a linear learning rate schedule to decay the learning rate. We use Adam optimizer with weight decay parameters $(\beta_1 = 0.95, \beta_2 = 0.98)$ in all our experiments.

The training latencies of small scale models on 8k context length are:
\begin{enumerate}
    \item Softmax (without FlashAttention): 2.40 step/sec.
    \item Polynomial (p=4,8): 2.65 step/sec.
    \item Polysketch (learned, r=64, 12 layers): 2.71 step/sec.
    \item Polysketch (learned, r=32, 12 layers): 4.68 step/sec.
    \item Polysketch (learned, r=64, 13 layers): 2.49 step/sec.
    \item Polysketch (learned, r=32, 13 layers): 4.34 step/sec.
    \item Polysketch (learned + local, r=64, 12 layers): 2.70 step/sec.
    \item Polysketch (learned + local, r=32, 12 layers): 4.42 step/sec.
    \item Polysketch (learned + local, r=64, 13 layers): 2.48 step/sec.
    \item Polysketch (learned + local, r=32, 13 layers): 4.12 step/sec.
\end{enumerate}
Note that the difference between above training latencies and those presented in Table~\ref{tab:training-latencies} is due to the different number of tokens per batch (0.5M vs 1M).

The training latencies of medium scale models on 8k context length are reported as follows:
\begin{enumerate}
    \item Softmax (without FlashAttention): 0.87 step/sec.
    \item Polynomial (p=4,8): 0.89 step/sec.
    \item Polysketch (learned, r=64, 24 layers): 0.99 step/sec.
    \item Polysketch (learned, r=32, 24 layers): 1.62 step/sec.
    \item Polysketch (learned, r=64, 26 layers): 0.92 step/sec.
    \item Polysketch (learned, r=32, 26 layers): 1.52 step/sec.
    \item Polysketch (learned + local, r=64, 24 layers): 0.98 step/sec.
    \item Polysketch (learned + local, r=32, 24 layers): 1.59 step/sec.
    \item Polysketch (learned + local, r=64, 26 layers): 0.91 step/sec.
    \item Polysketch (learned + local, r=32, 26 layers): 1.46 step/sec.
\end{enumerate}
For large scale models, since the context length is only 2k (recall that as we mentioned earlier, non-kernel based methods are either too slow or facing OOM issues for longer context length for the large-scale), the running time of kernel based methods do not take advantage from linearization. 
The purpose is to compare the model quality only.
So we omit the training latencies of large scale models here.

\paragraph{Additional results.}
The results similar to Table~\ref{tab:main-table} but for the models trained on only 30k steps is shown in Table~\ref{tab:main-table-2}.

\subsection{Scaling with Model Sizes}
From the results in Table~\ref{tab:main-table} and Table~\ref{tab:main-table-2}, we have the following main observations: (i) Polysketch attention (learned + local) \textbf{closely matches} the performance and sometimes outperforms models trained with softmax attention.
(ii) Models trained with Polysketch attention improve with increasing model sizes showing promise to be a replacement for softmax even in the largest models to \textbf{achieve lower training latencies} without significant \textbf{performance issues}.
(iii) Strong performance of learned Polysketch attention, \textbf{without relying on local attention}, shows the capability of our proposed attention mechanism and that the results that we obtain with learned+local Polysketch attention are \textbf{not just due to using exact polynomial attention within the blocks}.

\section{Model Sizes for Each Attention Mechanism}\label{sec:model_sizes}
In this section, we include the sizes of all models that we trained.

\subsection{Small scale models}
The default configuration has 12 layers, 12 attention heads per layer, each attention head has head size 64.
The number of parameters of each model is stated as the following.

12 layer models:
\begin{enumerate}
\item Softmax, Polynomial (degree = 2, 4 \& 8): 110M
\item Polysketch (learned, sketch size = 64, 12 layers), Polysketch (learned + local, sketch size = 64, 12 layers): 113M
\item Polysketch (learned, sketch size = 32, 12 layers), Polysketch (learned + local, sketch size = 32, 12 layers): 111M
\end{enumerate}

13 layer models:
\begin{enumerate}
\item Polysketch (learned, sketch size = 64, 13 layers), Polysketch (learned + local, sketch size = 64, 13 layers): 120M
\item Polysketch (learned, sketch size = 32, 13 layers), Polysketch (learned + local, sketch size = 32, 13 layers): 118M
\item Polysketch (random, sketch size = 32, 64, 13 layers), Polysketch (random + local, sketch size = 32, 64, 13 layers), Performer (2k features): 117M
\end{enumerate}

\subsection{Medium scale models}
The default configuration has 24 layers, 16 attention heads per layer, each attention head has head size 64.
The number of parameters of each model is stated as the following.

24 layer models:
\begin{enumerate}
\item Softmax, Polynomial (degree = 4, 8): 337M
\item Polysketch (learned, sketch size = 64, 24 layers), Polysketch (learned + local, sketch size = 64, 24 layers): 341M
\item Polysketch (learned, sketch size = 32, 24 layers), Polysketch (learned + local, sketch size = 32, 24 layers): 337M
\end{enumerate}

26 layer models:
\begin{enumerate}
\item Polysketch (learned, sketch size = 64, 26 layers), Polysketch (learned + local, sketch size = 64, 26 layers): 367M
\item Polysketch (learned, sketch size = 32, 26 layers), Polysketch (learned + local, sketch size = 32, 26 layers): 362M
\end{enumerate}

\subsection{Large scale models}
The default configuration has 36 layers, 20 attention heads per layer, each attention head has head size 64.
The number of parameters of each model is stated as the following.

36 layer models:
\begin{enumerate}
\item Softmax, Polynomial (degree = 4, 8): 748M
\end{enumerate}

39 layer models:
\begin{enumerate}
    \item Polysketch (learned, sketch size = 64, 39 layers), Polysketch (learned + local, sketch size = 64, 39 layers): 817M
    \item Polysketch (learned, sketch size = 32, 39 layers), Polysketch (learned + local, sketch size = 32, 39 layers): 811M
\end{enumerate}

\section{Reciepe of Transformer++}\label{sec:transformer_pp}
We add sinusoidal position embeddings~\cite{vaswani2017attention} to the input embeddings and use Rotary Position Embeddings (RoPE)~\cite{su2021roformer} at all attention heads.
 We use Gated Linear Units \citep{dauphin2017language, shazeer2020glu} with an expansion factor of 4 as the FeedForward layer in the network.
We use GELU as the non-linearity. 
All models are trained using Adam optimizer with weight decay and a peak learning rate of 7e-4. 


\end{document}